\newcommand{\prob}{{\mathbf P}}
\newcommand{\supp}{{\mathrm{supp}\,}}
\newcommand{\argmax}{{\mathrm{argmax}\,}}
\newcommand{\argmin}{{\mathrm{argmin}\,}}
\newcommand{\Z}{{\mathbb Z}}
\newcommand{\N}{{\mathbb N}}
\newcommand{\im}{\mathrm{Im}}
\newcommand{\DM}{\mathrm{DM}}
\newenvironment{proof}{{\bf Proof.}}{\hfill$\square$\vskip\baselineskip}
\newtheorem{example}{Example}
\newtheorem{theorem}{Theorem}
\newtheorem{cor}[theorem]{Corollary}
\newtheorem{prop}[theorem]{Proposition}
\newtheorem{definition}[theorem]{Definition}
\newcommand\independent{\protect\mathpalette{\protect\independenT}{\perp}}
\def\independenT#1#2{\mathrel{\rlap{$#1#2$}\mkern2mu{#1#2}}}
\newcommand{\notindependent}{\independent \hspace{-4.1mm} \diagup \hspace{1.4mm}}
\begin{document}

\title{Causal Inference on Discrete Data using\\ Additive Noise Models}

\author{Jonas Peters \thanks{jonas.peters@tuebingen.mpg.de}\hspace{0.5cm}
 Dominik Janzing \thanks{dominik.janzing@tuebingen.mpg.de} \hspace{0.5cm}
 Bernhard Sch\"olkopf \thanks{bernhard.schoelkopf@tuebingen.mpg.de} \vspace{0.5cm}\\
 MPI for Biological Cybernetics\\
 Spemannstr. 38\\
 72076 T\"ubingen, Germany}
\date{}

\maketitle

\begin{abstract}
Inferring the causal structure of a set of random variables from a finite sample of the joint distribution is an important problem in science. Recently, methods using additive noise models have been suggested to approach the case of continuous variables.
In many situations, however, the variables of interest are discrete or even have only finitely many states.
In this work we extend the notion of additive noise models to these cases.
We prove that whenever the joint distribution $\prob^{(X,Y)}$ admits such a model in one direction, e.g. $Y=f(X)+N, \; N \independent X$, it does not admit the reversed model $X=g(Y)+\tilde N, \; \tilde N \independent Y$ as long as the model is chosen in a generic way.
Based on these deliberations we propose an efficient new algorithm that is able to distinguish between cause and effect for a finite sample of discrete variables. In an extensive experimental study we show that this algorithm works both on synthetic and real data sets.
\end{abstract}

\section{Introduction}
Inferring causal relations 
by analyzing statistical dependences among observed random variables
 is a challenging task if no controlled randomized experiments are available.
So-called constraint-based approaches to causal discovery \citep{Pearl00,Spirtes} 
select among all directed acyclic graphs (DAGs) those that satisfy the Markov condition and the faithfulness assumption, i.e., those for which the observed independences are imposed by the structure rather than being a result of specific choices of parameters of the Bayesian network.
These approaches are unable to distinguish among causal DAGs that impose the same independences. In particular,
it is impossible to distinguish between $X\rightarrow Y$ and $Y\rightarrow X$.
More recently, several methods have been suggested that use not only conditional independences, but also
more sophisticated properties of the joint distribution. For simplicity, we explain the ideas for the two variable setting since this case is particularly challenging.
 \cite{Kano2003} use models
\begin{equation}\label{anoise}
Y=f(X)+ N
\end{equation}
where $f$ is a linear function and $N$ is additive noise that is independent of the hypothetical cause $X$. This is an example for an additive noise model from $X$ to $Y$. 
Apart from trivial cases, $P(X,Y)$ can only admit such a model from $X$ to $Y$ and from $Y$ to $X$ in the bivariate Gaussian case.
\citet{Hoyer} generalize the method to non-linear functions $f$ and showed that generic models of this form generate joint distributions that do not admit such an additive noise model from $Y$ to $X$.
\citet{Zhang} augment the model by applying a non-linear function $g$ to the rhs of eq.~(\ref{anoise})
and still obtain identifiability for generic cases. 
\citet{peters09} use independent linear additive noise models in order to detect whether a sample of a time series has been reversed. Their positive results further support this way of causal reasoning.  
All these proposals, however, were only designed for real-valued variables $X$ and $Y$.

For discrete variables, \cite{Neurocomputing} propose a method to measure the complexity of causal models
via a Hilbert space norm of the logarithm of conditional densities and prefer models that induce smaller norms.
\cite{SunLauderdale} fit joint distributions of cause and effect with conditional densities whose logarithm is
a second order polynomial (up to the log-partition function) and show that this often makes causal directions
identifiable when some or all variables are discrete.
For discrete variables, several Bayesian approaches \citep{Heckerman} are also applicable, but the construction of good priors are challenging and often the latter are designed such that Markov equivalent DAGs still remain indistinguishable.

Here, we extend the model in eq.~(\ref{anoise}) to the discrete case in two different ways: (I) If both $X$ and $Y$ take values in $\Z$ (the support may be finite, though) additive noise models can be defined analogously to the continuous case. (II) If both $X$ and $Y$ take only finitely many values we can still define additive noise models by interpreting the $+$ sign as an addition in the finite ring $\Z/m\Z$. We propose to apply this method to variables where the cyclic structure is appropriate
(e.g., the direction of the wind after discretization, day of the year, season).
However, the applicability of this second model class is not restricted to random variables that take integers as values: 
Assume that $X$ and $Y$ take values in $\mathcal A:=\{a_1, \ldots, a_m\}$ and $\mathcal B:=\{b_1, \ldots, b_{\tilde m}\}$, which are structureless sets. Considering functions $f:\mathcal A\rightarrow \mathcal B$ and models with
$\prob(Y=b_j \,|\,X=a_i)= p$ if $b_j=f(a_i)$ and $(1-p)/(m-1)$ otherwise, is
a special case of an additive noise model: Impose any cyclic structure on the data and use the additive noise $\prob(N=0)=p, \prob(N=l)=(1-p)/(m-1)$ for $l\neq 0$. 
This may be helpful whenever the random variables are categorical and when these categories do not inherit any kind of ordering (e.g. different treatments of organisms or phenotypes). 
\\
In the following article we refer to (I) by saying {\it integer constraint}, whereas model (II) satisfies the {\it cyclic constraint}.


The main idea of the causal inference method we propose goes as follows: If such an additive noise model exists in one direction but not in the other, we prefer the
former based on Occam's Razor and infer it to be the causal direction. 

Such a procedure is only sensible if there are only few instances, in which there is an additive noise models in both directions. If, for example, all additive noise models from $X$ to $Y$ also allow an additive noise model from $Y$ to $X$, we could not draw any causal conclusions at all. We will show that {\it reversible} cases are very rare and thereby answer this theoretical question.

For a practical causal inference method we have to test whether the data admits an additive noise model and thus have to perform a discrete regression. But since in the discrete case regularization of the regression function is not necessary, in principle we would have to check all possible functions and test whether they result in independent residuals. This is highly intractable, of course, and we therefore propose an efficient heuristic procedure that proved to work very well in practice.

In section 2 we extend the concept of additive noise models to discrete random variables and show the corresponding identifiability results for generic cases in section 3. In section 4 we introduce an efficient algorithm for causal inference on finite data, for which we show experimental results in section 5. We conclude in section 6. 

\section{Additive Noise Models for Discrete Variables}
As it has been proposed for the continuous case by \citet{Shimizu2006, Hoyer, Zhang} we assume the following causal principle to hold throughout the remainder of this article:

{\bf Causal Inference Principle (for discrete random variables)} \hspace{0.1cm} {\it Whenever $Y$ satisfies an additive noise model with respect to $X$ and not vice versa then $X$ is a cause for $Y$, and we write $X \rightarrow Y$.}  

\citet{bastian} give further theoretical support for this principle using the concept of Kolmogorov complexity 
and \citet{peters09} use this way of reasoning for detecting the arrow of time. 

Note that whenever there is no additive noise model in any direction (which may well happen) we do not draw any causal conclusions and other causal inference methods should be tried. 

We now precisely explain what we mean by an additive noise model in the case of discrete random variables. For simplicity we denote $p_X(x)=\prob(X=x)$, $p_Y(y)=\prob(Y=y)$,  $n(l)=\prob(N=l)$ and $\tilde n(k)=\prob(\tilde N=k)$ and $\supp X$ is defined as the set of all values that $X$ takes with probability larger than $0$: $\supp X:=\{k\,|\,p_X(k)>0\}$.

\subsection{Integer Constraint}
Assume that $X$ and $Y$ take values in $\Z$ (their distributions may have finite support). We say that there is an additive noise model (ANM) from $X$ to $Y$ if
$$
Y=f(X)+N\,, \quad N \independent X
$$
where $f: \Z \rightarrow \Z$ is an arbitrary function and $N$ a noise variable that takes integers as values, too.

Furthermore we require $n(0)\geq n(j)$ for all $j\neq 0$.
This does not restrict the model class, but is due to a freedom we have in choosing $f$ and $N$: If $Y=f(X)+N, \, N \independent X$, then we can always construct a new function $f_j$, such that $Y=f_j(X)+N_j, \, N_j \independent X$ by choosing $f_j(i)=f(i)+j$ and $n_j(i)=n(i+j)$.

Such an additive noise model is called {\it reversible} if there is also an additive noise model from $Y$ to $X$, i.e. if it satisfies an additive noise model in both directions.

\subsection{Cyclic Constraint}
We can extend
additive noise models to random variables which inherit a cyclic structure and therefore take values in a periodic domain. 
Random variables are usually defined as measurable maps from a probability space into the real numbers. We thus make the following definition
\begin{definition}
Let $(\Omega, \mathcal F, \prob)$ be a probability space. A function $X: \Omega \rightarrow \mathbf Z/m\mathbf Z$ is called an {\rm $m$-cyclic random variable} if $X^{-1}(k) \in \mathcal F \; \forall k \in \mathbf Z/m\mathbf Z$. 
All other concepts of probability theory (like distributions and expectations) can be constructed analogous to the well-known case, in which $X$ takes values $\{0, \ldots, m-1\}$.
\end{definition}

Let $X$ and $Y$ be $m$- and $\tilde m$-cyclic random variables. We say that $Y$ satisfies an additive noise model from $X$ to $Y$ if there is a function $f:\mathbf Z/m\mathbf Z \rightarrow \mathbf Z/{\tilde m}\mathbf Z$ and an $\tilde m$-cyclic noise $N$ such that
$$
Y=f(X)+N\; \mbox{ and }N \independent X.
$$
Again we require $n(0)\geq n(j)$ for all $j\neq 0$ and call this model {\it reversible} if there is a function $g:\mathbf Z/{\tilde m}\mathbf Z \rightarrow \mathbf Z/m\mathbf Z$ and an $m$-cyclic noise $\tilde N$ such that
$$
X=g(Y)+\tilde N\; \mbox{ and }\tilde N \independent Y.
$$

\subsection{Relations}
The following two remarks are essential in order to understand the relationship between integer and cyclic constraints:

(1) The difference between these two models manifests in the target domain. If we consider an ANM from $X$ to $Y$ it is important whether we put integer or cyclic constraints on $Y$ (and thus on $N$). It does not make a difference, however, whether we consider the regressor $X$ to be cyclic (with a cycle larger than the support of $X$) or not. The independence constraint remains the same.

(2) In the finite case additive noise models with cyclic constraints are more general than the ones with integer constraints: Assume there is an ANM $Y=f(X)+N$, where all variables are taken to be non-cyclic and $Y$ takes values between $k$ and $l$, say. 
Then we still have an ANM $Y=f(X)+N$ if we regard $Y$ to be $l-k+1$-cyclic because $N \mod (l-k+1)$ remains independent of $X$. It is possible, however, that $N \notindependent X$, but $N \mod (l-k+1) \independent X$
(see Example 2 in Section 3.2.1).\\
\begin{wrapfigure}[10]{r} {6.5cm}
\begin{center}
\begin{tikzpicture}[scale=1.1]
   \draw (-2.5,1.5) rectangle (2.5,-1.5);
   \begin{scope}
      \clip (-0.8,-0.4) ellipse (1.5cm and 0.7cm);
      \clip (0.8,0.4) circle (1.5cm and 0.7cm);
      \fill[color=gray] (-2,1.5) rectangle (2,-1.5);
   \end{scope}
   \draw (-0.8,-0.4) ellipse (1.5cm and 0.7cm);
   \draw (0.8,0.4) circle (1.5cm and 0.7cm);
   \draw (-1,-0.5) node {$F$};
   \draw (1,0.5) node {$B$};
   \draw (1.5,-1) node {$A$};
\end{tikzpicture}
\end{center}
\caption{How large is $F \cap B$?} 
\label{intersection}
\end{wrapfigure}
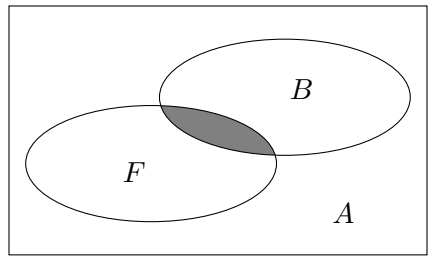

\section{Identifiability} \label{sec_theorie}
Whether an additive noise model is allowed depends on the form of the joint distribution $\prob^{(X,Y)}$. 
Let $F$ be the subset of the set $A$ of all possible joint distributions that allow an additive noise
model from $X$ to $Y$ in the ``forward direction'', whereas $B$ allows an additive noise model in the backward direction from $Y$ to $X$. Some trivial examples like $p_X(0)=1, n(0)=1$ and $f(0)=2$ immediately show that there are joint distributions allowing additive noise models in both directions, meaning $F \cap B \neq \emptyset$ (see Figure \ref{intersection}). 
But how large is this intersection? Our method would not be useful if we find out that $F$ and $B$ are almost the same sets. Then most additive noise models can be fit in either both directions or in none. For additive noise models with 
integer constraints and with cyclic constraints we identify the intersection $B \cap F$ and show that it is indeed a very small set. If we are unlucky and the data generating process we consider happens to be in $B \cap F$, our method does not give wrong results, but answers ``I do not know the answer''. In all other situations the method identifies the correct direction given that we observe enough data. The proofs are provided in the appendix.\\

\subsection{Integer Constraint}  \label{sec_non_cyclic_id}
\subsubsection{$Y$ or $X$ has finite support}
First we assume that either the support of $X$ or the support of $Y$ is finite. This already covers the situation in most applications.
Figure \ref{ex1} (the dots indicate a probability greater than 0) shows an example of a joint distribution that allows an ANM from $X$ to $Y$, but not from $Y$ to $X$. This can be seen easily at the ``corners'' $X=1$ and $X=7$: Whatever we choose for $g(0)$ and $g(4)$, the distribution of $\tilde N\,|\,Y=0$ is supported only by one point, whereas $\tilde N\,|\,Y=4$ is supported by 3 points. Thus $\tilde N$ cannot be independent of $Y$.

Figure \ref{counter_finite} shows a (rather non-generic) example that allows an ANM in both directions if we choose $p_X(a_i)=\frac{1}{36}, p_X(b_i)=\frac{2}{36}$ for $i=1, \ldots, 4$ and $p_X(a_i)=\frac{2}{36}, p_X(b_i)=\frac{4}{36}$ for $i=5, \ldots, 8$. 

We prove the following
\begin{theorem} \label{yfinite}
An additive noise model $X \rightarrow Y$ is reversible $\Longleftrightarrow$ there exists a disjoint decomposition $\bigcup_{i=1}^l C_i=\supp X$, such that 
\begin{itemize}
\item The $C_i$s are shifted versions of each other
$$\forall i \,\exists d_i\geq0\,:\, C_i = C_0 + d_i$$
and $f$ is piecewise constant:
$$f\mid_{C_{i}} \equiv c_i \; \forall i.$$
\item The probability distributions on the $C_i$s are shifted and scaled versions of each other with the same shift constant as above: For $x \in C_i$
$$\prob(X=x) = \prob(X=x-d_i) \cdot \frac{\prob(X \in C_i)}{\prob(X \in C_0)}$$
holds. 
\item The sets $c_i + \supp N:=\{c_i+h\,:\,n(h)>0\}$ are disjoint. 
\end{itemize}
\end{theorem}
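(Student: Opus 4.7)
My starting point is the bridge identity that both models, together, force on the joint support:
$$p_X(x)\,n(y-f(x)) \;=\; p_Y(y)\,\tilde n(x-g(y)).$$
Equivalently, the ``row'' through each $x$ is $f(x)+\supp N$, the ``column'' through each $y$ is $g(y)+\supp\tilde N$, and the conditional probabilities within a row (resp.\ column) are shifted copies of $n$ (resp.\ $\tilde n$). For sufficiency, condition (d) guarantees that each $y\in\supp Y$ determines a \emph{unique} index $i(y)$ with $y-c_{i(y)}\in\supp N$; I set $g(y):=d_{i(y)}$ and, using Bayes together with (b) and (c), compute $\prob(X=x\mid Y=y)=p_X(x-d_{i(y)})/\prob(X\in C_0)$ for $x\in C_{i(y)}$. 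This is a function of $x-g(y)$ alone, so $\tilde N:=X-g(Y)$ is independent of $Y$, with $\tilde n(\cdot)=p_X(\cdot)/\prob(X\in C_0)$ supported on $C_0$.

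\textbf{Necessity.} Assume both directions hold and take $C_i:=f^{-1}(c_i)\cap\supp X$ for the distinct values $c_i$ of $f$ on $\supp X$, so that (b) is automatic. The crux is condition (d): the sets $R_i:=c_i+\supp N$ are pairwise disjoint. Finiteness of $\supp N$ (which follows from the assumption that $X$ or $Y$ has finite support) rules out $R_i=R_j$ for $i\neq j$, since a finite set has no nonzero period. Partial overlap is the delicate case: if $y^*\in R_i\cap R_j$ and $y'\in R_i\setminus R_j$ (such a $y'$ must exist because $|R_i|=|R_j|$), then the column at $y^*$ carries mass on the disjoint union $C_i\sqcup C_j$ (plus possibly further $C_k$), while the column at $y'$ carries mass on $C_i$ without $C_j$. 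My plan is to combine the bridge identity at $y^*$ and $y'$ with the shift-rigidity of the finite set $\supp\tilde N$: the weights $\tilde n(x-g(\cdot))$ for $x\in C_i$ pin $g(y)$ uniquely on $R_i$, which constrains how the extra $C_j$-mass at $y^*$ can be absorbed, and leads to a contradiction with the factorisation $p_X(x)\,n(y-c_j)=p_Y(y)\,\tilde n(x-g(y))$ on $C_j$.

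\textbf{Finishing and main obstacle.} Once (d) is established, for each $y\in R_i$ the column at $y$ equals exactly $C_i$, hence $C_i=g(y)+\supp\tilde N$; by finiteness of $\supp\tilde N$, $g$ is therefore constant on $R_i$, say $g\equiv\gamma_i$. All $C_i$ are thus shifts of the same set $\supp\tilde N$, giving (a) with $d_i:=\gamma_i-\gamma_0$, and the bridge identity collapses on $C_i$ to $p_X(x)/\prob(X\in C_i)=\tilde n(x-\gamma_i)$; substituting $x':=x-d_i\in C_0$ yields (c). The main obstacle is clearly the disjointness step: mere cardinality of the columns is not enough to exclude partial overlap of the $R_i$, so the argument must exploit the full bridge identity together with finiteness of both $\supp N$ and $\supp\tilde N$ to preclude the ``sliding'' configurations that would otherwise be consistent with both marginal shapes. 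The remaining structural conditions (a) and (c) are then routine bookkeeping.
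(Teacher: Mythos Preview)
Your sufficiency argument is correct and essentially identical to the paper's: define $g(y):=d_{i(y)}$ on $c_i+\supp N$ and verify that $\prob(\tilde N=h\mid Y=y)=p_X(h)/\prob(X\in C_0)$ does not depend on $y$.

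For necessity, however, there is a genuine gap at exactly the point you flag as the main obstacle. You define $C_i$ as the level sets of $f$ and then try to argue directly that the $R_i=c_i+\supp N$ are pairwise disjoint, using ``shift--rigidity'' of $\supp\tilde N$ and the bridge identity. But the step ``the weights $\tilde n(x-g(\cdot))$ for $x\in C_i$ pin $g(y)$ uniquely on $R_i$'' is not justified: the ratios $p_X(x)n(y-c_i)/p_Y(y)$ for $x\in C_i$ give the same \emph{pattern} at every $y\in R_i$, but if $\tilde n$ has repeated values (or repeated blocks), this pattern does not determine $g(y)$, and the contradiction you announce does not materialise. You describe a plan, not a proof, and the plan as stated can fail. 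A second, smaller issue: your claim that $\supp N$ is finite whenever $X$ or $Y$ has finite support is false when only $X$ is finite; in that case it is $\supp\tilde N$ that is forced to be finite, not $\supp N$.

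The paper sidesteps the whole overlap analysis by a clean ``corner'' induction on the ordered values of $\supp Y$ (or, by symmetry, of $\supp X$). Writing $\supp Y=\{y_0<\cdots<y_m\}$ and $\tilde C_k:=\supp(X\mid Y=y_k)$, one first shows $f$ is constant on $\tilde C_m$: if $f(x_1)<f(x_2)$ for $x_1,x_2\in\tilde C_m$, then $f(x_2)+N_{\max}>f(x_1)+N_{\max}=y_m$ would lie in $\supp Y$, a contradiction. One then inducts downward: if $\tilde C_k$ meets some $\tilde C_{\tilde k}$ with $\tilde k>k$, the backward model forces $\tilde C_k=\tilde C_{\tilde k}$ (equal cardinality of all columns), hence $f\mid_{\tilde C_k}$ is constant by the induction hypothesis; if $\tilde C_k$ is disjoint from all higher $\tilde C_{\tilde k}$, the same max--argument applies again. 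Disjointness of the sets $c_k+\supp N$ then drops out immediately, because any overlap would force two distinct $C_k$'s into the same $\tilde C_i$. Your steps (a) and (c) are then obtained exactly as you outline. The moral: the order structure on $\Z$ gives you the disjointness for free, whereas a purely combinatorial overlap argument via the bridge identity is genuinely harder to close.
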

Obviously, such a decomposition for $\supp Y$ that satisfies the same criteria must exist, too (symmetry argument). We are now given a full characterization of all cases that allow an ANM in both directions. Even each of the conditions by itself is very restrictive, so that all conditions together describe a very small class of models: in almost all cases the direction of the model is identifiable. In Figure \ref{counter_finite} all $a_i$ belong to $C_0$, all $b_j$ to $C_1$ and $d_1=1$.

As for the other theorems of this section the proof is provided in the appendix. Its main point is based on the asymmetric effects of the ``corners'' of the joint distribution. In order to allow for an infinite support of $X$ (or $Y$) the proof generalizes the concept of ``corners''.

\begin{figure}[h]
\begin{center}
\begin{minipage}[t]{0.36\textwidth}
\centering
\begin{tikzpicture}[scale=0.37,inner sep=1.55mm]
  \draw[gray,very thin] (-0.3,-0.3) grid (8.3,8.3);
  \draw[->] (-1,0) -- (8.5,0) node[right] {$X$};
  \draw[->] (0,-1) -- (0,8.5) node[above] {$Y$};
  \foreach \i in {0,1,2,3,4,5,6}
  \foreach \j in {0,1,2}
    {\fill (\i+1,\i+\j) circle (0.2cm);}
  \draw[shift={(2,0)}] (0pt,2pt) -- (0pt,-2pt) node[below] {$2$};
  \draw[shift={(4,0)}] (0pt,2pt) -- (0pt,-2pt) node[below] {$4$};
  \draw[shift={(6,0)}] (0pt,2pt) -- (0pt,-2pt) node[below] {$6$};
  \draw[shift={(8,0)}] (0pt,2pt) -- (0pt,-2pt) node[below] {$8$};
  \draw[shift={(0,2)}] (-3pt,0pt) -- (2pt,0pt) node[left] {$2$};
  \draw[shift={(0,4)}] (-3pt,0pt) -- (2pt,0pt) node[left] {$4$};
  \draw[shift={(0,6)}] (-3pt,0pt) -- (2pt,0pt) node[left] {$6$};
  \draw[shift={(0,8)}] (-3pt,0pt) -- (2pt,0pt) node[left] {$8$};
\end{tikzpicture}
\caption{This joint distribution satisfies an additive noise model only from $X$ to $Y$.
} 
\label{ex1}
\end{minipage}
\hspace{0.03\textwidth}
\begin{minipage}[t]{0.57\textwidth}
\centering
\begin{tikzpicture}[scale=0.37,inner sep=1.55mm]
  \draw[gray,very thin] (-0.3,-0.3) grid (20.3,8.3);
  \draw[->] (-1,0) -- (20.5,0) node[right] {$X$};
  \draw[->] (0,-1) -- (0,8.5) node[above] {$Y$};
  \foreach \i in {0,1}
  \foreach \j in {0,1,2}
    {\fill (5*\i+1,1+2*\j) circle (0.13cm);
    \fill (5*\i+3,1+2*\j) circle (0.13cm);
    \fill (5*\i+2,4+2*\j) circle (0.2cm);
    \fill (5*\i+4,4+2*\j) circle (0.2cm);}
  \foreach \i in {2,3}
  \foreach \j in {0,1,2}
    {\fill (5*\i+1,1+2*\j) circle (0.2cm);
    \fill (5*\i+3,1+2*\j) circle (0.2cm);
    \fill (5*\i+2,4+2*\j) circle (0.3cm);
    \fill (5*\i+4,4+2*\j) circle (0.3cm);}
  \draw[shift={(1,0)}] (0pt,2pt) -- (0pt,-2pt) node[below] {$a_1$};
  \draw[shift={(3,0)}] (0pt,2pt) -- (0pt,-2pt) node[below] {$a_2$};
  \draw[shift={(6,0)}] (0pt,2pt) -- (0pt,-2pt) node[below] {$a_3$};
  \draw[shift={(8,0)}] (0pt,2pt) -- (0pt,-2pt) node[below] {$a_4$};
  \draw[shift={(11,0)}] (0pt,2pt) -- (0pt,-2pt) node[below] {$a_5$};
  \draw[shift={(13,0)}] (0pt,2pt) -- (0pt,-2pt) node[below] {$a_6$};
  \draw[shift={(16,0)}] (0pt,2pt) -- (0pt,-2pt) node[below] {$a_7$};
  \draw[shift={(18,0)}] (0pt,2pt) -- (0pt,-2pt) node[below] {$a_8$};
  \draw[shift={(2,0)}] (0pt,2pt) -- (0pt,-2pt) node[below] {$b_1$};
  \draw[shift={(4,0)}] (0pt,2pt) -- (0pt,-2pt) node[below] {$b_2$};
  \draw[shift={(7,0)}] (0pt,2pt) -- (0pt,-2pt) node[below] {$b_3$};
  \draw[shift={(9,0)}] (0pt,2pt) -- (0pt,-2pt) node[below] {$b_4$};
  \draw[shift={(12,0)}] (0pt,2pt) -- (0pt,-2pt) node[below] {$b_5$};
  \draw[shift={(14,0)}] (0pt,2pt) -- (0pt,-2pt) node[below] {$b_6$};
  \draw[shift={(17,0)}] (0pt,2pt) -- (0pt,-2pt) node[below] {$b_7$};
  \draw[shift={(19,0)}] (0pt,2pt) -- (0pt,-2pt) node[below] {$b_8$};
  \draw[shift={(0,3)}] (-3pt,0pt) -- (2pt,0pt) node[left] {$c_0$};
  \draw[shift={(0,6)}] (-3pt,0pt) -- (2pt,0pt) node[left] {$c_1$};
\end{tikzpicture}
\caption{If we choose the parameters carefully this joint distribution allows additive noise models in both directions. (Thickness stands for probability values.)} 
\label{counter_finite}
\end{minipage}
\end{center}
\end{figure}

\subsubsection{$X$ and $Y$ have infinite support}
\begin{theorem} \label{xyinfinite}
Consider an additive noise model $X \rightarrow Y$ where both $X$ and $Y$ have infinite support. We distinguish between two cases
\begin{enumerate}
\item {\bf N has compact support: }
$\exists m,l \in \Z$, such that $\supp N=[m,l]$.\\
Assume there is an ANM from $X$ to $Y$ and $f$ does not have infinitely many infinite sets, on which it is constant. Then the model is reversible $\Longleftrightarrow$ there exists a disjoint decomposition $\bigcup_{i=1}^l C_i=\supp X$ that satisfies the same conditions as in Theorem \ref{yfinite}. 
\item {\bf N has entire $\Z$ as support: } $\prob(N=k)>0 \, \forall \, k\in \Z$.\\
Suppose $X$ and $Y$ are not independent and there is an ANM $X\rightarrow Y$ and $Y\rightarrow X$. If $f$, the distribution of $N$ and all $p_X(k)$ for all $k\geq m$ for any $m \in \Z$ are known, then all other values $p_X(k)$ for $k < m$ are determined. That means even only a small fraction of the parameters determine the remaining parameters.
\end{enumerate}
\end{theorem}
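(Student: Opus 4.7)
The plan is to handle the two cases separately: reducing Case~1 to the corner/decomposition argument of Theorem~\ref{yfinite}, and Case~2 to a functional-equation argument that exploits both forward and backward ANM representations simultaneously.

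For Case~1 (compact $N$), I would exploit that $\supp N = [m,l]$ forces $\supp(Y \mid X = x) = f(x) + [m,l]$ for every $x \in \supp X$, so the joint support lives in a band of width $l-m+1$ around the graph of $f$. The plan is to use the boundaries of this band as substitutes for the ``corners'' that drove the proof of Theorem~\ref{yfinite}: for $y^* = f(x_0) + m$ one has $\supp(X \mid Y=y^*) = \{x \in \supp X : f(x) \in [y^*-l, y^*-m]\}$, which is strictly smaller than $\supp(X \mid Y = y^*+1)$ whenever there exists some $x_1 \in \supp X$ with $f(x_1) = y^*-l$ but no $x$ with $f(x) = y^*-l-1$. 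The hypothesis that $f$ does not have infinitely many infinite level sets is exactly what guarantees the existence of such boundary fibres; at any of them, $\tilde N \mid Y = y^*$ has strictly smaller support than on neighbouring fibres, and this is the asymmetry that forced the $C_i$-decomposition in Theorem~\ref{yfinite}. After running that argument on a sufficiently large finite window around the boundary fibre, the decomposition extends to all of $\supp X$ because the forward ANM is translation-homogeneous in its noise.

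For Case~2 (full $N$), the joint distribution admits two representations,
\[
p_X(i)\, n(j - f(i)) = p_Y(j)\, \tilde n(i - g(j)) \qquad \text{for all } i, j \in \Z.
\]
Since $n(\cdot)>0$ and $X\notindependent Y$, all factors are strictly positive, so I would take logarithms and form mixed second differences in $(i_1,i_2)\times (j_1,j_2)$ to eliminate both $\log p_X$ and $\log p_Y$. The resulting identity is a functional equation linking $\log n$, $\log \tilde n$, $f$, and $g$; I would read it as a recipe that recovers $g$ and $\tilde n$ (up to an overall normalising constant) from $f$ and $n$ alone. Once $g$ and $\tilde n$ are in hand, the single identity $p_X(i) = p_Y(j)\,\tilde n(i-g(j))/n(j-f(i))$ determines $p_X(i)$ up to the scalar $p_Y(j)$, which is itself pinned down by the marginal relation $p_Y(j)=\sum_i p_X(i)\, n(j-f(i))$ together with the known values $p_X(k)$ for $k\geq m$, because the infinitely many unknowns $p_X(k)$, $k<m$, enter that relation through convolutions with $n$ and can therefore be solved for recursively.

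The main obstacle, in both cases, is passing from a local identity to a global statement on all of $\Z$. In Case~1 one must rule out a ``drifting'' decomposition as $x\to\pm\infty$; this is where the hypothesis on the level sets of $f$ is essential, since without it $f$ could encode long-range ambiguities incompatible with a finite $C_i$-decomposition. In Case~2 the delicate point is proving that the functional equation for $(\tilde n, g)$ really has a \emph{unique} solution given $(f,n)$, because otherwise two different extensions of $p_X$ below $m$ could in principle generate distinct but internally consistent reverse ANMs on the same joint distribution.
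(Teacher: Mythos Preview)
Your Case~1 sketch and the paper's proof share the opening observation that the sets $C_y:=\supp(X\mid Y=y)$ must be finite and mutually translated, but the arguments diverge after that. You propose to locate a boundary fibre, run the corner induction of Theorem~\ref{yfinite} on a finite window, and then ``extend by translation-homogeneity of the noise''. That extension step is exactly the difficulty, and you flag it yourself without resolving it: since $\supp Y$ is unbounded there is no extremal $y$ from which to launch the Theorem~\ref{yfinite} induction, and nothing you have written prevents the local decomposition on one window from drifting relative to the next. The paper does not reduce to Theorem~\ref{yfinite} at all. Instead it builds, from any $x_1$, a chain $x_1,\hat x_1,x_2,\hat x_2,\dots$ by repeatedly jumping along the upper edge of the band, and argues that the bounded width of the $C_y$ forces the support pattern in successive ``boxes'' to become eventually periodic. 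Periodicity produces a constant $\alpha$ with $\prob(X=x_1,Y=f(x_1)+N_{\max})=\alpha^k\,\prob(X=x_{kn},Y=f(x_{kn})+N_{\max})$ for all $k$, which is incompatible with $p_X$ being a probability measure unless the chain terminates at the first step---and termination is precisely the $C_i$-decomposition. So the missing ingredient in your plan is this periodicity/geometric-scaling contradiction; translation-homogeneity of $N$ alone does not supply it.

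For Case~2 your route is more elaborate than necessary, and the worry you raise at the end is the actual obstruction to it. Recovering all of $(g,\tilde n)$ from $(f,n)$ via mixed second differences need not have a unique solution, and the theorem does not claim it does---you are given the tail of $p_X$ as well. The paper exploits that tail directly: since $X\notindependent Y$ there exist $y_1,y_2$ with $g(y_1)\neq g(y_2)$, and the computable values $\prob(X=k,Y=y_j)=p_X(k)\,n(y_j-f(k))$ for $k\ge m$ are, read from the backward model, proportional to $\tilde n(k-g(y_j))$. The two tail sequences are therefore shifts of one another, which pins down the single integer $d:=g(y_2)-g(y_1)$. One ratio identity then suffices, e.g.\ for $d<0$,
\[
\frac{\prob(X=m-1,Y=y_2)}{\prob(X=m,Y=y_2)}=\frac{\tilde n(m-1-g(y_2))}{\tilde n(m-g(y_2))}=\frac{\prob(X=m-d-1,Y=y_1)}{\prob(X=m-d,Y=y_1)},
\]
whose right-hand side lies entirely in the known tail; this yields $\prob(X=m-1,Y=y_2)$ and hence $p_X(m-1)$, and one iterates. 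No global functional equation for $\tilde n$ is ever solved.
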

Note that the first case is again a complete characterization of all instances of a joint distribution, an ANM in both directions is conform with. The second case does not yield a complete characterization, but shows how restricted we are for a given function $f$ and noise $N$ in order to choose a distribution $\prob^X$ that yields a reversible ANM.

\subsection{Cyclic Constraint} \label{sec_cyclic_id}
Again we investigate how often $X$ and $Y$ can both satisfy an additive noise model with respect to each other, this time considering cyclic constraints. Assume 
$Y=f(X)+N$ with $N \independent X$. We will show that in the generic case the model is not reversible, meaning there is no $g$ and $\tilde N$, such that $X=g(Y)+\tilde N$ with $\tilde N \independent Y$.

Note that the model $Y=f(X)+N$ is reversible if and only if there is a function $g$, such that
\begin{equation} \label{eq:id}
p(x)\cdot n\big(y-f(x)\big)= q(y) \cdot \tilde n\big(x-g(y)\big) \; \forall x,y\,,
\end{equation}
where
\begin{align*}
q(y) = \sum_{\tilde x} p(\tilde x) n\big(y-f(\tilde x)\big) \; \mbox{ and } \;
\tilde n(a)=\frac{p\big(g(\tilde y)+a\big)\cdot n\Big(\tilde y-f\big(g(\tilde y)+a\big)\Big)}{q(\tilde y)} \quad \forall \tilde y \,:\, q(\tilde y)\neq0 
\end{align*}

\subsubsection{Non-Identifiable Cases}
First, we give three examples of additive noise models that are not identifiable. This restricts the class of situations in which identifiability can be expected. Figure \ref{fig_counter_cyclic} shows instances of all three examples.

\begin{example}
$X$ and $Y$ can be independent even though there is an ANM from $X$ to $Y$:\\
(i) If $Y=f(X)+N$ and $f(k)=\mathrm{const}$ for all $k: p(k)\neq0$, then the model is reversible.\\
(ii) If $Y=f(X)+N$ for a uniformly distributed noise $N$, then the model is reversible. \vspace{-0.3cm}
\end{example}
\begin{proof}
(i) It follows $Y=N+\mathrm{const}$ with probability one. Thus $X$ and $Y$ are independent and $X=g(Y)+X$ for $g\equiv0$ is a backward model.
(ii) 
$\hat N:=f(X)+N$ is still uniformly distributed and thus independent of $X$. We thus have again $Y=\hat N+0$.
\end{proof}

\begin{example}
If $Y=f(X)+N$ for a bijective and affine $f$ and uniformly distributed $X$, then the model is reversible. \vspace{-0.3cm}
\end{example}
\begin{proof}
Since $X$ is uniformly distributed and $f(x)=ax+b$ is bijective, $Y$ is uniform, too. For $g(y)= f^{-1}(y)$ and $\tilde n(k)=n(\tilde y-f(g(\tilde y)+k))=n(b-f(k))$ equation \eqref{eq:id} is satisfied.
\end{proof}

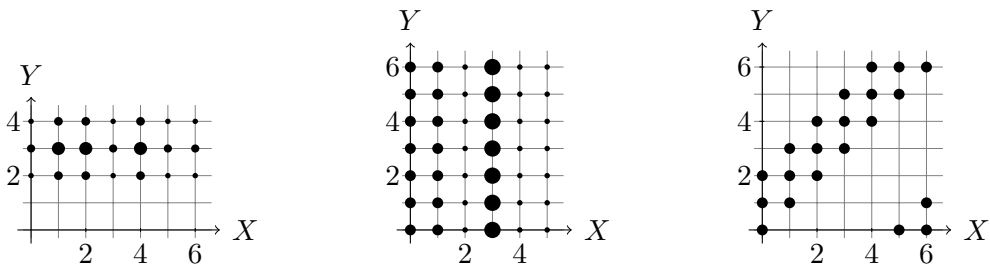
\begin{figure}[h]
\centering
\begin{tikzpicture}[scale=0.36,inner sep=1.55mm]
  \draw[gray,very thin] (-0.3,-0.3) grid (6.6,4.6);
  \draw[->] (-0.5,0) -- (6.9,0) node[right] {$X$};
  \draw[->] (0,-0.5) -- (0,4.9) node[above] {$Y$};
  \foreach \i in {0,3,5,6}
    {\fill (\i,3) circle (0.15cm);
\fill (\i,2) circle (0.1cm);
\fill (\i,4) circle (0.1cm);}
 \foreach \i in {1,2,4}
    {\fill (\i,3) circle (0.24cm);
\fill (\i,2) circle (0.16cm);
\fill (\i,4) circle (0.16cm);}
  \draw[shift={(4,0)}] (0pt,2pt) -- (0pt,-2pt) node[below] {$4$};
  \draw[shift={(2,0)}] (0pt,2pt) -- (0pt,-2pt) node[below] {$2$};
  \draw[shift={(6,0)}] (0pt,2pt) -- (0pt,-2pt) node[below] {$6$};
  \draw[shift={(0,2)}] (-3pt,0pt) -- (2pt,0pt) node[left] {$2$};
  \draw[shift={(0,4)}] (-3pt,0pt) -- (2pt,0pt) node[left] {$4$};
\end{tikzpicture}
\hspace{1.1cm}
\begin{tikzpicture}[scale=0.36,inner sep=1.55mm]
  \draw[gray,very thin] (-0.3,-0.3) grid (5.6,6.6);
  \draw[->] (-0.5,0) -- (5.9,0) node[right] {$X$};
  \draw[->] (0,-0.5) -- (0,6.9) node[above] {$Y$};
  \foreach \i in {0,1,2,3,4,5,6}
    {\fill (0,\i) circle (0.2cm);
    \fill (1,\i) circle (0.2cm);
    \fill (2,\i) circle (0.1cm);
    \fill (4,\i) circle (0.1cm);
    \fill (5,\i) circle (0.1cm);
    \fill (3,\i) circle (0.3cm);}
\draw[shift={(2,0)}] (0pt,2pt) -- (0pt,-2pt) node[below] {$2$};
\draw[shift={(4,0)}] (0pt,2pt) -- (0pt,-2pt) node[below] {$4$};
  \draw[shift={(0,2)}] (-3pt,0pt) -- (2pt,0pt) node[left] {$2$};
  \draw[shift={(0,4)}] (-3pt,0pt) -- (2pt,0pt) node[left] {$4$};
  \draw[shift={(0,6)}] (-3pt,0pt) -- (2pt,0pt) node[left] {$6$};
\end{tikzpicture}
\hspace{1.1cm}
\begin{tikzpicture}[scale=0.36,inner sep=1.55mm]
  \draw[gray,very thin] (-0.3,-0.3) grid (6.6,6.6);
  \draw[->] (-0.5,0) -- (6.9,0) node[right] {$X$};
  \draw[->] (0,-0.5) -- (0,6.9) node[above] {$Y$};
  \foreach \i in {0,1,2,3,4}
  \foreach \j in {0,1,2}
    {\fill (\i,\i+\j) circle (0.2cm);}
  \fill (6,6) circle (0.2cm);
  \fill (5,6) circle (0.2cm);
  \fill (5,5) circle (0.2cm);
  \fill (5,0) circle (0.2cm);
  \fill (6,0) circle (0.2cm);
  \fill (6,1) circle (0.2cm);

  \draw[shift={(4,0)}] (0pt,2pt) -- (0pt,-2pt) node[below] {$4$};
\draw[shift={(2,0)}] (0pt,2pt) -- (0pt,-2pt) node[below] {$2$};
  \draw[shift={(6,0)}] (0pt,2pt) -- (0pt,-2pt) node[below] {$6$};
  \draw[shift={(0,2)}] (-3pt,0pt) -- (2pt,0pt) node[left] {$2$};
  \draw[shift={(0,4)}] (-3pt,0pt) -- (2pt,0pt) node[left] {$4$};
  \draw[shift={(0,6)}] (-3pt,0pt) -- (2pt,0pt) node[left] {$6$};
\end{tikzpicture}
\caption{The joint distributions allow additive noise models in both directions
. They are instances of Examples 1(i), 1(ii) and 2 (from left to right).}
\label{fig_counter_cyclic}
\end{figure}

\subsubsection{Identifiability Results}
Motivated by the counter examples we now make the assumptions that $f$ is not constant (Example 1(i)), $N$ is not uniformly distributed (Example 1(ii)) and that not both $X$ and $Y$ are uniformly distributed (Example 2).
Without proof we state the conjecture that this is already enough to ensure identifiability meaning an ANM can only hold in one direction.   

\begin{theorem}[Conjecture]
Assume $X,Y$ and $N$ are random variables that are not uniformly distributed and non-degenerate (that is they do not only take one value). Assume further that we have an additive noise model $Y=f(X)+N, \, N \independent X$ with non-constant $f$. Then there is no additive noise model from $Y$ to $X$.
\end{theorem}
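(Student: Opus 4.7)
The plan is to assume, for contradiction, that both a forward and a backward ANM exist, so that the fundamental identifiability equation
\[
p(x)\, n\bigl(y-f(x)\bigr) \;=\; q(y)\, \tilde n\bigl(x-g(y)\bigr) \qquad \forall\, (x,y)\in\Z/m\Z\times\Z/\tilde m\Z
\]
holds, and to show that any such solution must realise one of the three exceptional regimes ruled out by Examples~1 and 2: $f$ is constant, $N$ is uniform, or $X$ and $Y$ are simultaneously uniform. Viewed as a matrix $M_{xy}=p(x)\,n(y-f(x))$, this equation simultaneously describes the rows of $M$ as cyclic shifts of $n$ (scaled by $p(x)$) and the columns as cyclic shifts of $\tilde n$ (scaled by $q(y)$); the proof amounts to reconciling these two very rigid structural descriptions.

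My first line of attack is a support-and-ratio analysis that parallels the strategy used for Theorem~\ref{yfinite}. Each row $x$ has $y$-support $f(x)+\supp N$ and each column $y$ has $x$-support $g(y)+\supp \tilde N$, so in particular $|\supp X|\cdot|\supp N|=|\supp Y|\cdot|\supp \tilde N|$. For any pair $x_1,x_2\in\supp X$ with $f(x_1)=f(x_2)$, the two rows share their $y$-support and along it the ratio $M_{x_1,y}/M_{x_2,y}=p(x_1)/p(x_2)$ is constant in $y$; written through the backward factorisation this gives $\tilde n(x_1-g(y))/\tilde n(x_2-g(y))\equiv p(x_1)/p(x_2)$, forcing $z\mapsto\tilde n(z)/\tilde n(z-\delta)$, with $\delta=x_1-x_2$, to be constant on the set $x_1-g(f(x_1)+\supp N)$. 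Iterated around the finite cycle, such multiplicative quasi-periodicity pushes $\tilde n$ toward a geometric progression, which on $\Z/m\Z$ collapses to uniformity unless the quasi-period wraps trivially; a symmetric analysis on pairs $y_1,y_2$ with $g(y_1)=g(y_2)$ constrains $n$.

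The second, complementary line of attack is Fourier analytic on $\Z/m\Z\times\Z/\tilde m\Z$. Expanding $M$ in characters and using the two factorisations yields
\[
\hat n(k)\cdot\sum_{x} p(x)\,\omega_m^{-jx}\,\omega_{\tilde m}^{-kf(x)} \;=\; \hat{\tilde n}(j)\cdot\sum_{y} q(y)\,\omega_{\tilde m}^{-ky}\,\omega_m^{-jg(y)}
\]
for every pair $(j,k)$, where $\omega_m=e^{2\pi i/m}$ and $\omega_{\tilde m}=e^{2\pi i/\tilde m}$. Specialising to $j=0$ or $k=0$ only recovers the familiar marginal Fourier identities for independent sums, but joint slices with both coordinates nonzero produce genuinely new algebraic constraints among $\hat n,\hat{\tilde n},\hat p,\hat q$ and the ``mixed'' characters of $f(X)$ and $g(Y)$. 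The assumed non-uniformity of $N,\tilde N,X,Y$ is exactly what guarantees that these characters are not all concentrated at the trivial mode, so the resulting system is genuinely over-determined.

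The main obstacle, and presumably the reason this remains a conjecture in the paper, is closing both attacks in full generality. The ratio argument degrades when $f$ is injective, and on a cyclic group the ``corner'' arguments that make the proof work in the integer case (Section~\ref{sec_non_cyclic_id}) are unavailable, because wraparound produces precisely the cancellations seen in Examples~1 and 2. I would therefore aim for a two-step reduction: first, show that reversibility forces $f$ to be bijective, else the ratio argument yields uniformity of $N$ or $\tilde N$; second, handle the bijective case via the Fourier identities above, where $f$ should essentially have to be an affine permutation and the non-uniformity of $X$ rules out the Example~2 degeneracy. The technical heart is step two, where turning the over-determined system of character identities into a concrete algebraic contradiction appears to need either a cleaner character-theoretic identity than I can see or a careful induction on the divisor structure of $m$ and $\tilde m$.
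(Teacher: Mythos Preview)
The paper does \emph{not} prove this statement. It is explicitly labeled ``Theorem [Conjecture]'' and introduced with the phrase ``Without proof we state the conjecture\ldots''; the authors then immediately retreat to the weaker Theorem~\ref{id_cyclic}, which merely shows that a reversible cyclic ANM forces extra equality constraints on the parameters of $p$ or $n$. So there is no paper proof to compare your proposal against.

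Your proposal is therefore not a reproduction of anything in the paper but an attempt to settle an open conjecture, and you yourself identify the gap correctly: neither the ratio argument nor the Fourier argument is closed. Concretely, in your step one the claim ``reversibility forces $f$ to be bijective, else the ratio argument yields uniformity of $N$ or $\tilde N$'' is exactly what Proposition~\ref{prop:g_not_inj} and Proposition~\ref{prop:f_not_inj} fall short of --- they only extract \emph{some} equality constraints, not full uniformity, because the function $h$ can have cycles that absorb the multiplicative quasi-periodicity without forcing a collapse. In your step two, the Fourier identity you write down is correct, but ``over-determined'' is heuristic: you would need to show that the system has no solution outside the exceptional regimes, and for general $m,\tilde m$ and general bijections $f$ (not just affine ones) this character-theoretic obstruction has not been exhibited. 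As written, the proposal is a reasonable research programme but not a proof, and it does not go beyond what the paper already establishes in Theorem~\ref{id_cyclic}.
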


In this work we prove a slightly weaker statement. Usually the distribution $n(l)$ (similar for $p(k)$) is determined by $\tilde m-1$ free parameters. As long as the sum remains smaller than $1$, there are no (equality) constraints for the values of $n(0), \ldots, n(\tilde m-2)$. Only $n(\tilde m-1)$ is determined by $\sum_{l=0}^{\tilde m-1} n(l)=1$. We show that in the case of an reversible additive noise model the number of free parameters of the marginal $n(l)$ is heavily reduced. The exact number of constraints depends on the backward model, but can be bounded from below by 2. 
Furthermore the proof shows that a dependence between values of $p$ and $n$ is introduced.
Both of these constraints are considered to lead to non-generic models. That means for any {\it generic} choice of $p$ and $n$ we can only have an additive noise model in one direction.\\
Note further that $(\#\supp X \cdot \#\supp N)$ is the number of points $(x,y)$ that have probability greater than $0$. It must be possible to distribute these points equally to all points from $\#\supp Y$ in order to allow a backward additive noise model. Thus we have the necessary condition $\#\supp Y \,\vert\, (\#\supp X \cdot \#\supp N)$.

\begin{theorem} \label{id_cyclic}
Assume $Y=f(X)+N, \, N \independent X$ with non-uniform $X$ ($m$-cyclic), $Y$ ($\tilde m$-cyclic) and $N$ ($\tilde m$-cyclic) and non-constant $f$. 
\begin{itemize}
\item[(i)] If $\#\supp Y \,\not \vert \, (\#\supp X \cdot \#\supp N)$ then there is no additive noise model from $Y$ to $X$. 
\item[(ii)] Assume that $ \#\supp X=m, \#\supp N=\tilde m$. If there is an additive noise model from $Y$ to $X$, at least one additional equality constraint is introduced for the choice of either $p$ or $n$.
\end{itemize}
\end{theorem}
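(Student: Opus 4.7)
The plan is to handle (i) by a direct support-size count and (ii) by a functional-equation analysis that exploits the full-support hypothesis.

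For (i), observe that under the forward model,
$\supp \prob^{(X,Y)} = \{(x, f(x)+n) : x \in \supp X,\ n \in \supp N\}$,
and these pairs are pairwise distinct (distinct $x$ give distinct first coordinates; for fixed $x$, distinct $n$ give distinct second coordinates because translation in $\Z/\tilde m\Z$ is a bijection). Hence $|\supp \prob^{(X,Y)}| = \#\supp X \cdot \#\supp N$. By the identical reasoning applied to any hypothetical backward model $X=g(Y)+\tilde N$, this cardinality also equals $\#\supp Y \cdot \#\supp \tilde N$. Equating the two and noting that $\#\supp \tilde N \in \N$ yields the divisibility $\#\supp Y \mid \#\supp X \cdot \#\supp N$; the contrapositive is (i).

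For (ii), the hypotheses give $\supp Y = f(\Z/m\Z)+\Z/\tilde m\Z = \Z/\tilde m\Z$ and, by the quantitative version of (i) above, $\#\supp \tilde N = m$, so $\supp \tilde N = \Z/m\Z$ as well. Every value $p(x), n(a), q(y), \tilde n(b)$ is therefore strictly positive, and one may pass to logarithms in the reversibility identity $p(x)\,n(y-f(x)) = q(y)\,\tilde n(x-g(y))$. Forming the alternating log-sum over four points $(x_i,y_j)$, $i,j\in\{1,2\}$, eliminates the $\log p$ and $\log q$ contributions and yields the cross-ratio identity
\begin{equation*}
\frac{n(y_1-f(x_1))\, n(y_2-f(x_2))}{n(y_2-f(x_1))\, n(y_1-f(x_2))}
= \frac{\tilde n(x_1-g(y_1))\, \tilde n(x_2-g(y_2))}{\tilde n(x_1-g(y_2))\, \tilde n(x_2-g(y_1))}.
\end{equation*}
To convert this into a constraint on $p$ or $n$ alone, I would split on the injectivity of $g$. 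If $g$ is not injective, pick $y_1 \neq y_2$ with $g(y_1)=g(y_2)$: the right-hand side collapses to $1$, and by non-constancy of $f$ one may pick $x_1,x_2$ with $f(x_1)\neq f(x_2)$, giving the non-trivial equation $n(y_1-f(x_1))\,n(y_2-f(x_2)) = n(y_2-f(x_1))\,n(y_1-f(x_2))$ on $n$ — an equality constraint beyond normalization, since the four arguments are generically distinct and $n$ is non-uniform. Symmetrically, if $f$ is not injective one obtains a constraint on $\tilde n$ which, via the defining relation $\tilde n(x-g(y)) = p(x)\,n(y-f(x))/q(y)$, pulls back to an equality constraint coupling $p$ and $n$.

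The main obstacle is the remaining case where both $f$ and $g$ are bijections (forcing $m=\tilde m$), because no choice of four points trivializes either side of the cross-ratio identity. My approach would be to reduce, WLOG via the substitution $X \mapsto f(X)$, to $f=\mathrm{id}$, so that $Y=X+N$ and the reversibility equation forces $\tilde N = X - g(Y)$ to be independent of $Y = X+N$. Applying the discrete Fourier transform on $\Z/m\Z$ then converts the two independence statements into multiplicative relations among the characteristic functions $\hat p$ and $\hat n$; using that neither $p$ nor $n$ is uniform (so that $\hat p$ and $\hat n$ do not vanish outside $0$), one can show that these two relations are incompatible unless at least one additional algebraic equality among the values of $p$ and $n$ holds — the sought extra equality constraint.
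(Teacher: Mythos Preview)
Your argument for (i) and for the non-injective subcases of (ii) is essentially the paper's: the support count for (i) is identical, and your cross-ratio identity is precisely the ratio of two instances of the reversibility equation $p(x)\,n(y-f(x))=q(y)\,\tilde n(x-g(y))$, which is exactly how the paper extracts constraints on $n$ (when $g$ is non-injective) or on $p$ (when $f$ is non-injective). Your alternating log-sum is a clean unified packaging of both.

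The genuine gap is the bijective case. Two concrete problems with your Fourier sketch. First, after the reduction to $f=\mathrm{id}$ the backward model is still $X=g(Y)+\tilde N$ for an \emph{arbitrary} bijection $g$ on $\Z/m\Z$; the discrete Fourier transform diagonalizes only \emph{affine} maps, so for general $g$ you obtain no multiplicative relation linking $\hat p$ and $\hat n$ from the backward independence, and the two ``relations'' you allude to simply do not materialise. Second, ``non-uniform'' guarantees only that $\hat p$ (resp.\ $\hat n$) is nonzero at \emph{some} nonzero frequency, not at all of them, so the non-vanishing hypothesis you invoke is unavailable.

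The paper bypasses Fourier entirely. Using bijectivity of $g$, for each $y$ choose $t_y$ with $g(t_y)=g(y)-1$. Dividing the reversibility equation at $(x+1,y)$ by the one at $(x,t_y)$ cancels the $\tilde n$-factor (both equal $\tilde n(x+1-g(y))$) and yields
\[
\frac{p(x+1)}{p(x)}=\frac{n\bigl(t_y-f(x)\bigr)}{n\bigl(y-f(x+1)\bigr)}\cdot\frac{q(y)}{q(t_y)}.
\]
Taking the product over $x=0,\dots,m-1$ telescopes the left side to $1$; since $f$ is bijective, both $n$-products on the right run over the full multiset $\{n(l):l\in\Z/m\Z\}$ and cancel, leaving $q(y)^m=q(t_y)^m$, hence $q(y)=q(t_y)$. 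Chaining over all $y$ forces $q$ constant, i.e.\ $Y$ uniform, contradicting the hypothesis. So under the theorem's assumptions the bijective case is in fact impossible---stronger than the single extra equality you were trying to locate.
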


\subsection{Mixed Constraints}
With the results developed in the last two section we can cover even models with mixed constraints (for the precise conditions of ``usually'' see Section \ref{sec_cyclic_id}):\\
\begin{align*}
&Y=f(X)+N, \; N \independent X, \quad X \mbox{ cyclic},\quad Y, N \mbox{ non-cyclic}\\
\overset{2.3}{\Rightarrow} \quad &Y=f(X)+N, \; N \independent X, \quad X \mbox{ cyclic}, \quad Y, N \mbox{ cyclic}\\
\overset{3.2}{\Rightarrow} \quad  &\mbox{Usually there is no ANM } X=g(Y)+\tilde N, \; \tilde N \independent Y, \quad Y \mbox{ cyclic}, \quad X, \tilde N \mbox{ cyclic}\\
\overset{2.3}{\Rightarrow} \quad &\mbox{Usually there is no ANM } X=g(Y)+\tilde N, \; \tilde N \independent Y, \quad Y \mbox{ non-cyclic},\quad X, \tilde N \mbox{ cyclic}
\end{align*}
And, conversely:
\begin{align*}
&Y=f(X)+N, \; N \independent X, \quad X \mbox{ non-cyclic}, \quad Y, N \mbox{ cyclic}\\
\overset{2.3}{\Rightarrow} \quad &Y=f(X)+N, \; N \independent X, \quad X \mbox{ cyclic}, \quad Y, N \mbox{ cyclic}\\
\overset{3.2}{\Rightarrow} \quad  &\mbox{Usually there is no ANM } X=g(Y)+\tilde N, \; \tilde N \independent Y, \quad Y \mbox{ cyclic}, \quad X, \tilde N \mbox{ cyclic}\\
\overset{2.3}{\Rightarrow} \quad &\mbox{Usually there is no ANM } X=g(Y)+\tilde N, \; \tilde N \independent Y, \quad Y \mbox{ cyclic}, \quad X, \tilde N \mbox{ non-cyclic}
\end{align*}

\section{Practical Method for Causal Inference}
Based on our theoretical findings in Section \ref{sec_theorie} we propose the following method for causal inference (see \citet{Hoyer} for the continuous case):
\begin{enumerate}
\item Given: iid data of the joint distribution $\prob^{(X,Y)}$.
\item Regression of the model $Y=f(X)+N$ leads to residuals $\hat N$,\\
regression of the model $Y=f(X)+\tilde N$ leads to residuals $\hat{\tilde N}$.
\item If $\hat N \independent X$ and $\hat{\tilde N} \notindependent Y$ infer {\it ``$X$ is causing $Y$''},\\
if $\hat N \notindependent X$ and $\hat{\tilde N} \independent Y$ infer {\it ``$Y$ is causing $X$''},\\
if $\hat N \notindependent X$ and $\hat{\tilde N} \notindependent Y$ infer {\it ``I do not know (bad model fit)''} and\\
if $\hat N \independent X$ and $\hat{\tilde N} \independent Y$ infer {\it ``I do not know (both directions possible)''}.
\end{enumerate}
(The identifiability results show that in the generic case we will not find that both $N \independent X$ and $\tilde N \independent Y$.) This procedure requires discrete methods for regression and independence testing and we now discuss our choices.   
\subsection{Regression Method} \label{sec_regr}
Given a finite number of iid samples of the joint distribution $\prob^{(X,Y)}$ we denote the sample distribution by $\hat \prob^{(X,Y)}$. In continuous regression we usually minimize a sum consisting of a loss function (like an $\ell_2$-error) and a regularization term that prevents us from overfitting. 

{\it Regularization} of the regression function is at least in principle not necessary in the discrete case. Since we may observe many different values of $Y$ for one specific $X$ value there is no risk in overfitting. This introduces further difficulties compared to continuous regression since in principle we now have to try all possible functions from $X$ to $Y$ and compare the corresponding values of the loss function. 

Minimizing a {\it loss function} like an $\ell_p$ error is not appropriate for our purpose, either: after regression we evaluate the proposed function by checking the independence of the residuals. Thus we should choose the function that makes the residuals as independent as possible. 
Therefore we consider a dependence measure (DM) between residuals and regressor as loss function, which we denote by $\DM(\hat N, X)$.

Two problems remain:\\
(1) Assume the different $X$ values $x_1 < \ldots < x_n$ occur in the sample distribution $\hat \prob^{(X,Y)}$. Then one only has to evaluate the regression function on these values. More problematic is the range of the function. 
Since we can only deal with finite numbers, we have to restrict the range to a finite set. No matter how large we choose this set, it is always possible that the resulting function class does not contain the true function. 
But since we used the freedom of choosing an additive constant to require $n(0)>n(k)$ and $\tilde n(0)>\tilde n(k)$ for all $k\neq 0$, we will always find a sample $(X_i,Y_i)$ with $Y=f(X_i)$ if the sample size is large enough. Thus it would be reasonable to consider all $Y$ values that occur together with $X=x$ as a potential value for $f(x)$. But to even further reduce the impact of this problem we regard {\it all} values between $\min Y$ and $\max Y$ as possible values for $f$. And if there are too few samples with $X=x_j$ and the true value $f(x_j)$ is not included in $\{\min Y, \min Y+1, \ldots, \max Y\}$ we may not find the true function $f$, but the few ``wrong'' residuals do not have an impact on the independence.
In practice the following deliberation is much more relevant: 

\noindent
(2) Even if all values of the true function $f$ are one of the $m:=\#\{\min Y, \min Y+1, \ldots, \max Y\}$ considered values, the problem of checking all possible functions is not tractable: If $n=20$ and $m=16$ there are $16^{20}=2^{80}$ possible functions (the amount of particles in the universe is estimated to be $\approx 10^{79}$). We thus propose the following heuristic but very efficient procedure (the experimental results will show that it works very reliably in practice):

Start with an initial function $f^0$ that maps every value $x$ to the $y$ which occurred (together with this $x$) most often under all $y$. Iteratively we then update each function value separately. Keeping all other function values $f(\tilde x)$ with $\tilde x \neq x$ fixed we choose $f(x)$ to be the value that results in the ``most independent'' residuals. This is done for all $x$ and repeated up to $J$ times as shown in Algorithm 1. Recall that we required $n(0)\geq n(k)$ for all $k$.

\begin{algorithm}[h]
   \caption{Discrete Regression with Dependence Minimization}
   \label{alg}
\begin{algorithmic}[1]
   \STATE {\bfseries Input:} $\hat \prob(X, Y)$
   \STATE {\bfseries Output:} $f$ \vspace{0.3cm}
   \STATE $f^{(0)}(x_i):=\argmax_{y} \hat \prob(X=x_i, Y=y)$ \vspace{0.2cm}
   \REPEAT
   \STATE $j=j+1$
   \FOR{$i$ in a random ordering} 
   \STATE $f^{(j)}(x_i):=\argmin_{y} \DM \big(X,Y-f^{(j-1)}_{x_i\mapsto y}(X)\big)$
   \ENDFOR
   \UNTIL{residuals $\hat N:=Y-f^j(X)$ are independent of $X$ \hspace{0.2cm} {\bfseries or } \hspace{0.2cm} $j=J$. }
\end{algorithmic}
\end{algorithm}

In the algorithm, $f^{(j-1)}_{x_i\mapsto y}(X)$ means that we use the current version of $f^{(j-1)}$ but change the function value $f(x_i)$ to be $y$. If the $\argmax$ in the initialization step is not unique we take the largest possible $y$. We can even accelerate the iteration step if we do not consider all possible values $\{\min Y, \ldots, \max Y\}$, but only the five that give the highest values of $\hat \prob(X=x_i, Y=y)$ instead.   


\subsection{Independence Test and Dependence Measure}
Assume we are given joint iid samples $(W_i, Z_i)$ of the discrete variables $W$ and $Z$ and we want to test whether $W$ and $Z$ are independent.
In our implementation we only use Person's $\chi^2$ test (e.g. \cite{lehmann}), which is most commonly used. It computes the difference between observed frequencies and expected frequencies in the contingency table. The test statistic is known to converge towards a $\chi^2$ distribution, which is taken as an approximation even in the finite sample case.
For very few samples, however, this approximation and therefore the test will usually fail. It has been suggested (e.g. \citet{cran}) that instead of a $\chi^2$ test, Fisher's exact test \citep{lehmann} could be used if not more than $80\%$ of the expected counts are larger than 5 (``Cochran's condition'').

For a dependence measure we simply use the $p$-value of the independence test or the test statistic if the $p$-value is too small (in a computer system the $p$-value is sometimes regarded to be zero).  

\section{Experiments}
\subsection{Simulated Data.}
We first investigate the performance of our method on synthetic data sets. Therefore we simulate data from additive noise models and check whether the method is able to rediscover the true model. We showed in Section \ref{sec_theorie} that only very few examples allow a reversible ANM. 

Data sets 1a and 1b support these theoretical results. We simulate a large amount of data (1000--2000 data points) from many randomly chosen models. All models that allow an ANM in both directions are instances of our examples from above (without exception). 

Data sets 2a and 2b show how well our method performs for small data size and models that are close to non-identifiable. 

Data set 3a investigates empirically the run-time performance of our regression method and compares it with a brute-force search. 

Data set 3b shows that the method does not favor one direction if the supports of $X$ and $Y$ are of different size.
\subsubsection{Integer Constraints}
{\bf Data set 1a (identifiability).}\\
With equal probability we sample from a model with (1) $\supp X \subset \{1, \ldots, 4\}$, (2) $\supp X \subset \{1, \ldots, 6\}$, (3) $X$ binomial with parameters $(n,p)$, (4) $X$ geometric with parameter $p$, (5) $X$ hypergeometric with parameters $(M,K,N)$, (6) $X$ negative binomial with parameters $(n,p)$ or (7) $X$ Poisson with parameter $\lambda$. The parameters of these distributions and the function (with values between $-7$ and $7$) are also randomly chosen. We then consider $1000$ different models.
For each model we sample 1000 data points and apply our algorithm with $\alpha =0.05$.

The results given in Table \ref{taba:2a} show that the methods works well on almost all simulated data sets. The algorithm outputs ``bad fit in both directions'' in roughly $5\%$ of all cases, which corresponds to the chosen test level. The model is non-identifiable only in very few cases, which are shown in Table \ref{taba:2a}. All of these cases are instances of the counter examples from above. This experiment further supports our proposition that the model is identifiable in the generic case.

\begin{table}[h]
\begin{center}
{\small
\begin{tabular}[b]{r||c|c|c|c}
    	$\#$ samples &correct dir.&  wrong dir. & ``both dir. poss.'' &  ``bad fit in both dir.'' \\ \hline \hline
        total & 89.9\% & 0\% & 5.3\% & 4.8\%\\ \hline
	non-overlapping noise & -&- & 3.0\% &- \\ \hline
	$f$ constant & -& -& 2.3\% &-        
\end{tabular}}
\caption{Data Set 1a. The algorithm identifies the true causal direction in almost all cases. All models that were classified as reversible are either instances, where the noise does not ``overlap'' (i.e. $f(X)+\supp N$ are disjoint) or where $f$ is constant. For the remaining models the algorithm mistakes the residuals as being dependent in $4.8\%$ of the cases, which corresponds to the test level of $\alpha=5\%$.}
\label{taba:2a}
\end{center}
\end{table}

{\bf Data set 2a (close to non-identifiable).}\\ 
For this data set we sample from the model $Y=f(X)+N$ with 
$$
n(-2)=0.2, \, n(0)=0.5, \, n(2)=0.3 \quad \mbox{ and } \quad f(-3)=f(1)=1,\, f(-1)=f(3)=2.
$$
Depending on the parameter $r$ we sample $X$ from 
$$
p_X(-3)=0.1+\frac{r}{2}, \; p_X(-1)=0.3-\frac{r}{2} , \;p_X(1)=0.15-\frac{r}{2}, \;p_X(3)=0.45+\frac{r}{2}.
$$
For each value of the parameter $r$ ranging between $-0.2\leq r \leq 0.2$ we use 100 different samples, each of which has the size 400.

In Theorem \ref{yfinite} we proved that the ANM is reversible if and only if $r=0$. Figure
\ref{fig:2a} shows that the algorithm identifies the correct direction for $r\neq 0$. Again, the test level of $\alpha=5\%$ introduces indecisiveness of roughly the same size, which can be seen for $|r|\geq 0.15$. The number of such cases can be reduced by decreasing $\alpha$ (but would lead to some wrongly accepted backward models, too).
\begin{figure}
\begin{center}
\includegraphics[width=0.5\linewidth]{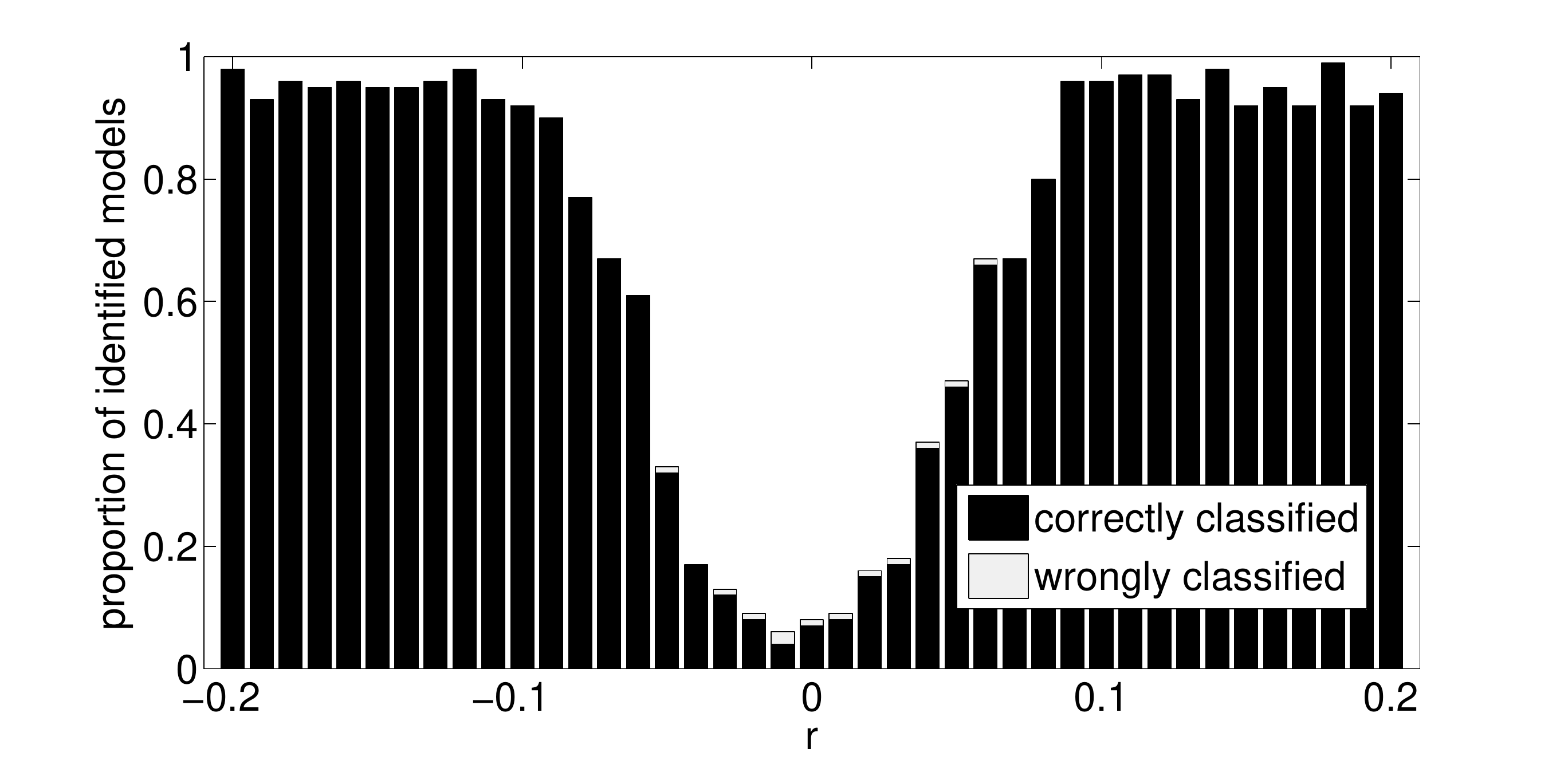}
\end{center}
\caption{Figure 1: Data set 2a. Proportion of correct and false results of the algorithm depending on the distribution of $N$. The model is not identifiable for $r=0$. If $r$ differs significantly from $0$ almost all decisions are correct.}
\label{fig:2a}
\end{figure}

{\bf Data set 3a (fast regression).}\\
The space of all functions from the domain of $X$ to the domain of $Y$ is growing very quickly in their sizes: If $\#\supp X=m$ and $\#\supp Y=\tilde m$ then the space $\mathcal F:=\{f:\supp X \rightarrow \supp Y\}$ has ${\tilde m}^m$ elements. If one of the variables has infinite support the set is even infinitely large (although this does not happen for any finite data set). It is clear that it is infeasible to optimize the regression criterion by trying every single function. As mentioned before one can argue that with high probability it is enough to only check the functions that correspond to an empirical mass that is greater than 0 (again assuming $n(0)>0$): It is likely that $\hat \prob(X=-2, Y=f(-2))>0$. We call these functions ``empirically supported''. But even this approach is often infeasible. In this experiment we compare the number of possible functions (with values between $\min Y$ and $\max Y$), the number of empirically supported functions and the number of functions the algorithm we proposed in Section \ref{sec_regr} check in order to find the true function (which it always did).

We simulated from the model $
Y=\mathrm{round} (0.5\cdot X^2)+N
$
for two different noise distributions:
$$
n_1(-2)=n_1(2)=0.05,\quad
n_1(-1)=n_1(0)=n_1(1)=0.3
$$
and
$$
n_2(-3)=n_2(3)=0.05,\quad
n_2(-2)=n_2(-1)=n_2(0)=n_2(1)=n_2(2)=0.18
$$
Each time we simulated a uniformly distributed $X$ with $i$ values between $-\frac{i-1}{2}$ and $\frac{i-1}{2}$ for $i=3, 5, 7, \ldots, 19$. For each noise/regressor distribution we simulated 100 data sets.

For $N_1$ and $i=9$, for example, there are $(11-(-2))^9 \approx 1.1\cdot 10^{10}$ possible functions in total and $5^9\approx 2.0 \cdot 10^6$ functions with positive empirical support. Our method only checked $104 \pm 33$ functions before termination. The full results are shown in Figure \ref{fig:3a}.

\begin{figure}
\begin{center}
\includegraphics[width=0.48\linewidth]{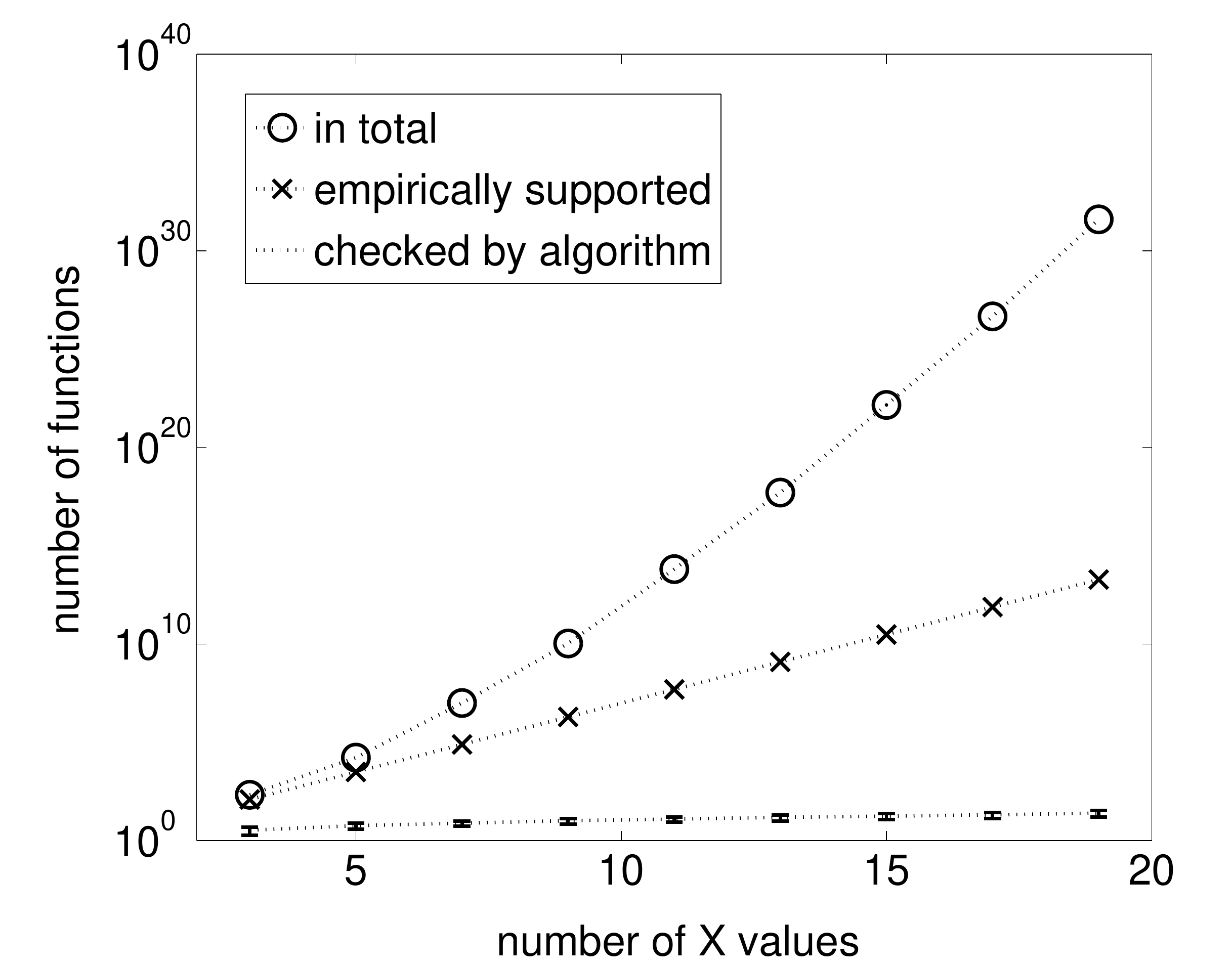}\hspace{0.02\linewidth}
\includegraphics[width=0.48\linewidth]{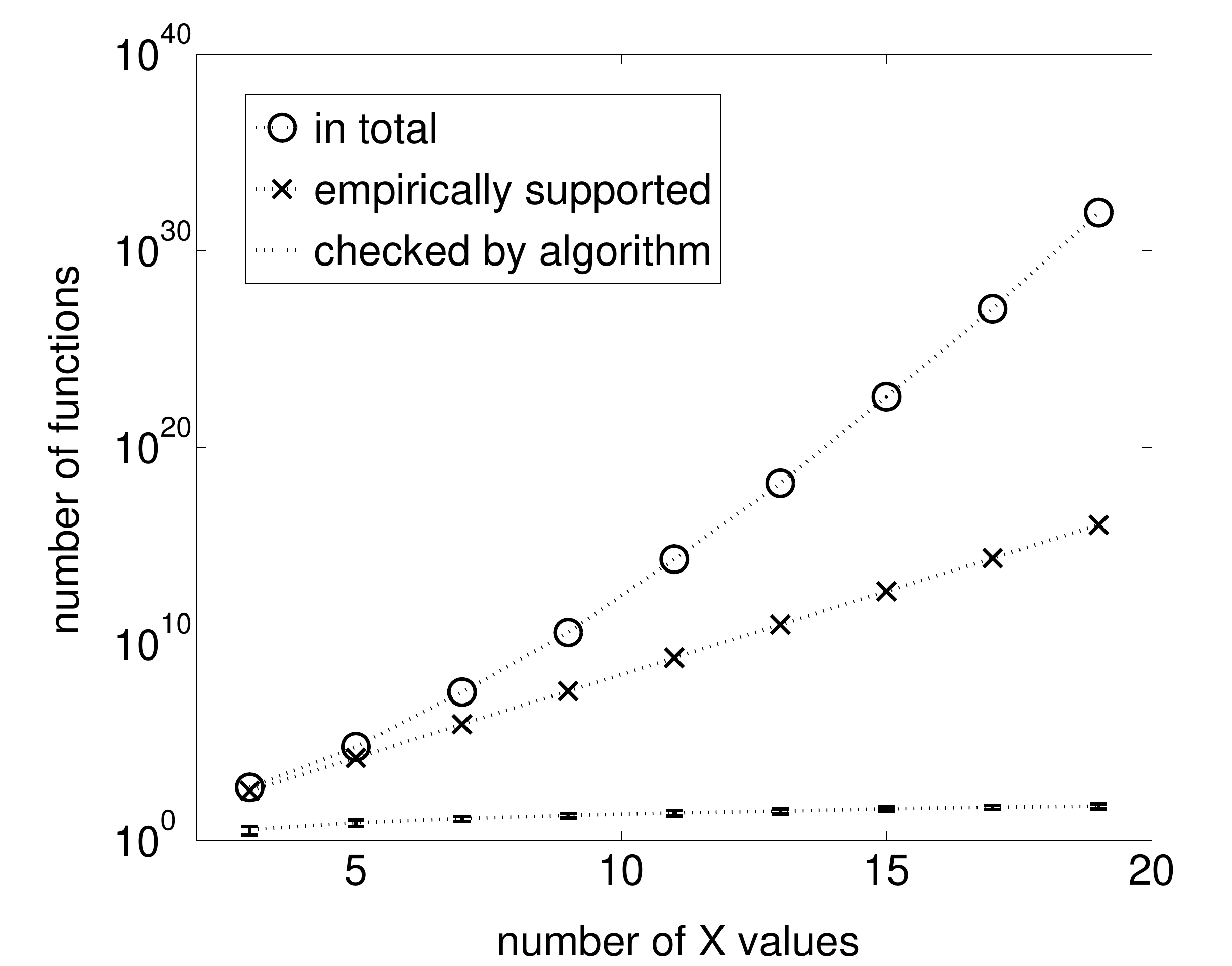}
\end{center}
\caption{Data set 3a. The size of the whole function space, the number of all functions with empirical support and the number of functions checked by our algorithm is shown for $N_1$ (left) and $N_2$ (right). An extensive search would be intractable in these cases. The algorithm we propose is very efficient and still finds always the correct function for all data sets.}
\label{fig:3a}
\end{figure}

\subsubsection{Cyclic Constraints}
{\bf Data set 1b (identifiability).}\\
For the three combinations $(m,\tilde m) \in \{(3,3),(3,5),(5,3)\}$ we consider $1000$ different models each: We randomly choose a function $f\neq \mathrm{const}$ and distributions for $p$ and $N$.
For each model we sample 2000 data points and apply our algorithm with $\alpha =0.05$.

The results given in Table \ref{tab:2} show that the method works well on almost all simulated data sets. The algorithm outputs ``bad fit in both directions'' in roughly $5\%$ of all cases, which corresponds to the chosen test level. The model is non-identifiable only in very few cases, which are shown in Table \ref{tab:2b}. All of these cases are instances of the counter examples from above. This experiment further supports our theoretical result that the model is identifiable in the generic case.

\begin{table}[h]
\begin{center}
{\small
\begin{tabular}[b]{r|l||c|c|c|c}
    	$m$ & $\tilde m$ & correct dir.&  wrong dir. & ``both dir. poss.'' &  ``bad fit in both dir.'' \\ \hline \hline
       3 & 3 & 95.3\% & 0\% & 0.8\% & 3.9\%\\ \hline
      3 & 5 & 94.8\% & 0\% & 0.0\% & 5.2\%\\ \hline
      5 & 3 & 95.5\% & 0\% & 0.3\% & 4.2\%
\end{tabular}}
\caption{Data Set 1b. The algorithm identifies the true causal direction in almost all cases. In a proportion corresponding to the test level ($\approx 5\%$), the algorithm mistakes the residuals as being dependent and thus does not find the correct model. Only in very few cases a model can be fit in both directions, which supports the results of section 3.}
\label{tab:2}
\end{center}
\end{table}

\begin{table}[h]
\begin{center}
{\small
\begin{tabular}[t]{l|c|c||c}
 function $f$& $p(1), \ldots, p(m)$ & $n(1), \ldots, n(\tilde m)$& Ex.  \\ \hline \hline
 $0\mapsto0, 1\mapsto 1, 2\mapsto 0$ & 0.27, 0.05, 0.69 & $0.30, 0.33, 0.37$&1(i)\\ \hline
 $0\mapsto0, 1\mapsto 2, 2\mapsto 0$ & 0.83, 0.00, 0.17 & $0.15, 0.26, 0.58$&1(i)\\ \hline
 $0\mapsto0, 1\mapsto 0, 2\mapsto 2$ & 0.40, 0.60, 0.00 & $0.37, 0.41, 0.22$&1(i)\\ \hline
 $0\mapsto2, 1\mapsto 0, 2\mapsto 2$ & 0.34, 0.53, 0.14 & $0.33, 0.34, 0.33$ & 1(ii)\\ \hline
 $0\mapsto0, 1\mapsto 0, 2\mapsto 2$ & 0.17, 0.74, 0.09 & $0.32, 0.33, 0.35$ & 1(ii)\\ \hline
 $0\mapsto1, 1\mapsto 0, 2\mapsto 1$ & 0.38, 0.42, 0.20 & $0.33, 0.34, 0.33$ & 1(ii)\\ \hline
 $0\mapsto1, 1\mapsto 2, 2\mapsto 2$ & 0.02, 0.86, 0.12 & $0.35, 0.30, 0.35$ &1(ii)\\ \hline
 $0\mapsto2, 1\mapsto 1, 2\mapsto 0$ & 0.33, 0.33, 0.34 & $0.85, 0.14, 0.02$&2\\ \hline \hline
 $0\mapsto1, 1\mapsto 0, 2\mapsto 1,3\mapsto 0,4\mapsto 0$ & 0.20, 0.47, 0.14, 0.08, 0.12 & $0.33, 0.33, 0.34$&1(ii)\\ \hline
$0\mapsto1, 1\mapsto 0, 2\mapsto 1,3\mapsto 1,4\mapsto 1$ & 0.55, 0.01, 0.03, 0.26, 0.14 & $0.37, 0.32, 0.31$&1(i)\\ \hline
$0\mapsto0, 1\mapsto 1, 2\mapsto 0,3\mapsto 1,4\mapsto 2$ & 0.03, 0.71, 0.06, 0.10, 0.32 & $0.32, 0.34, 0.34$&1(ii)\\ \hline


\end{tabular}}
\caption{Data Set 1b. This table shows the cases, where both directions were possible. Without exception they are instances of the examples given in section 3.}
\label{tab:2b}
\end{center}
\end{table}

{\bf Data set 2b (close to non-identifiable).}\\ 
For this data set let $m=\tilde m=4$ and $f=\mathrm{id}$. The distribution of $p$ is given by: $p(0)=0.6, p(1)=0.1, p(2)=0.1, p(3)=0.2$. Depending on the parameter $\frac{1}{2}\leq r \leq 0.8$ we sample the noise $N$ from the distribution $n(0)=n(1)=r/2, n(2)=n(3)=1/2-r/2$. That means $N$ is uniformly distributed if and only if $r=\frac{1}{2}$.

Example 1 and the fact that $n(0)\neq n(2)$ (see proof of Proposition \ref{prop:g_not_inj}) show that the model is not identifiable if and only if the noise distribution is uniform, i.e. if and only if $r=\frac{1}{2}$. The further $r$ is away from $\frac{1}{2}$, the more the noise differs from a uniform distribution and the easier it should be for our method to detect the true direction.
For each value of the parameter $r$ we use 100 different samples, each of which has size 200. Figure
\ref{fig::1} 
shows the results.

For $r=0.58$ and $r=0.68$ (indicated by the arrows in Figure \ref{fig::1}) we further investigate the dependence on the data size. Clearly, $r=0.58$ results in a model that is still very close to non-identifiability and thus we need more data to perform well (see Figure \ref{fig::2}). Note that non-identifiable models lead to very few, but not to wrong decisions.

\begin{figure}[h]
\begin{minipage}[b]{0.48\textwidth}
\begin{center}
\includegraphics[width=0.99\linewidth]{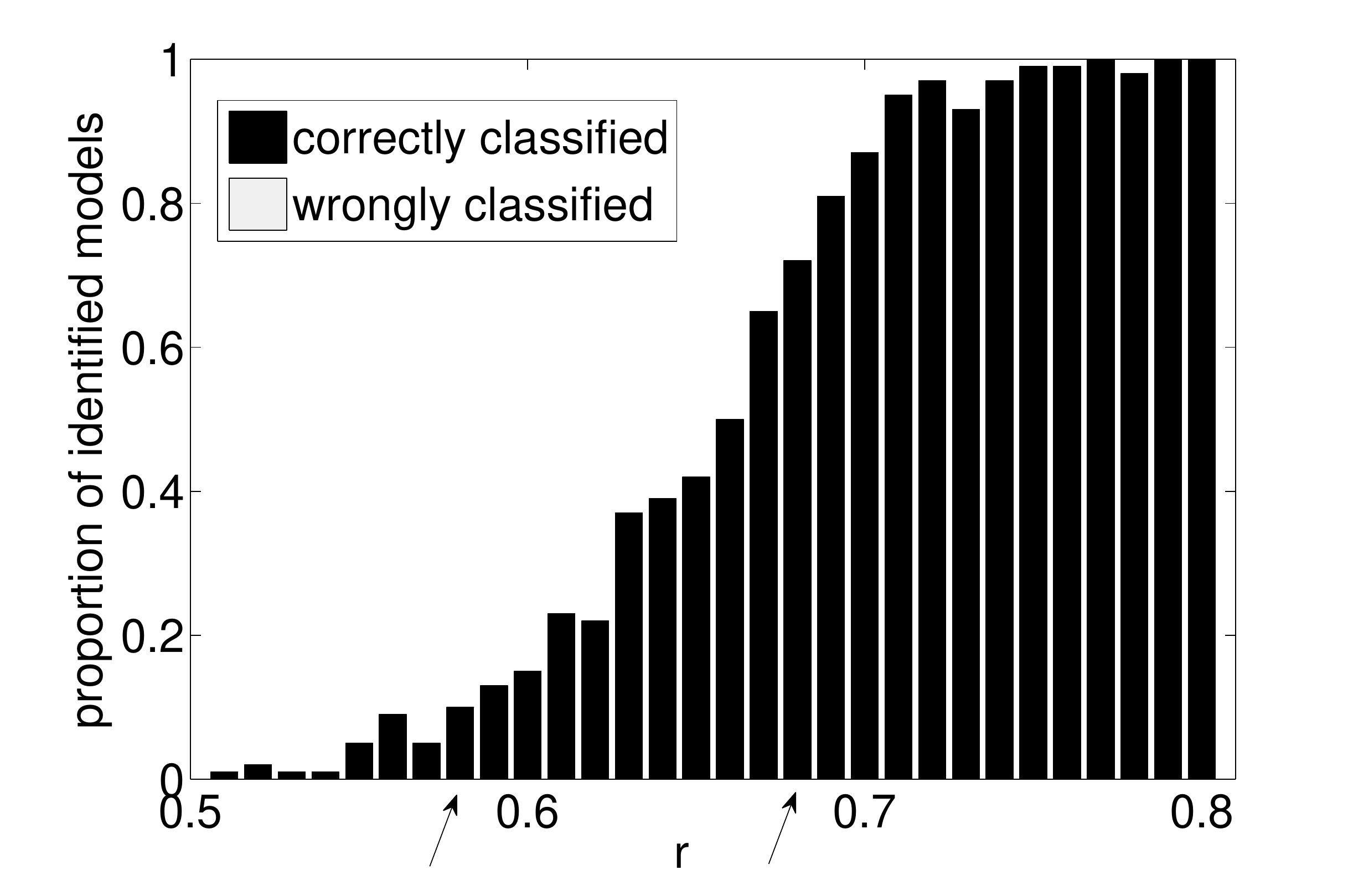}
\end{center}
\caption{Data set 2b. Proportion of correct results of the algorithm depending on the distribution of $N$ (test level $1\%$). The model is not identifiable for $r=0.5$. If $r$ differs significantly from $0.5$ the algorithm makes a decisions in almost all cases.}
\label{fig::1}
\end{minipage}
\hspace{0.5cm}
\begin{minipage}[b]{0.48\textwidth}
\begin{center}
\includegraphics[width=0.9\linewidth]{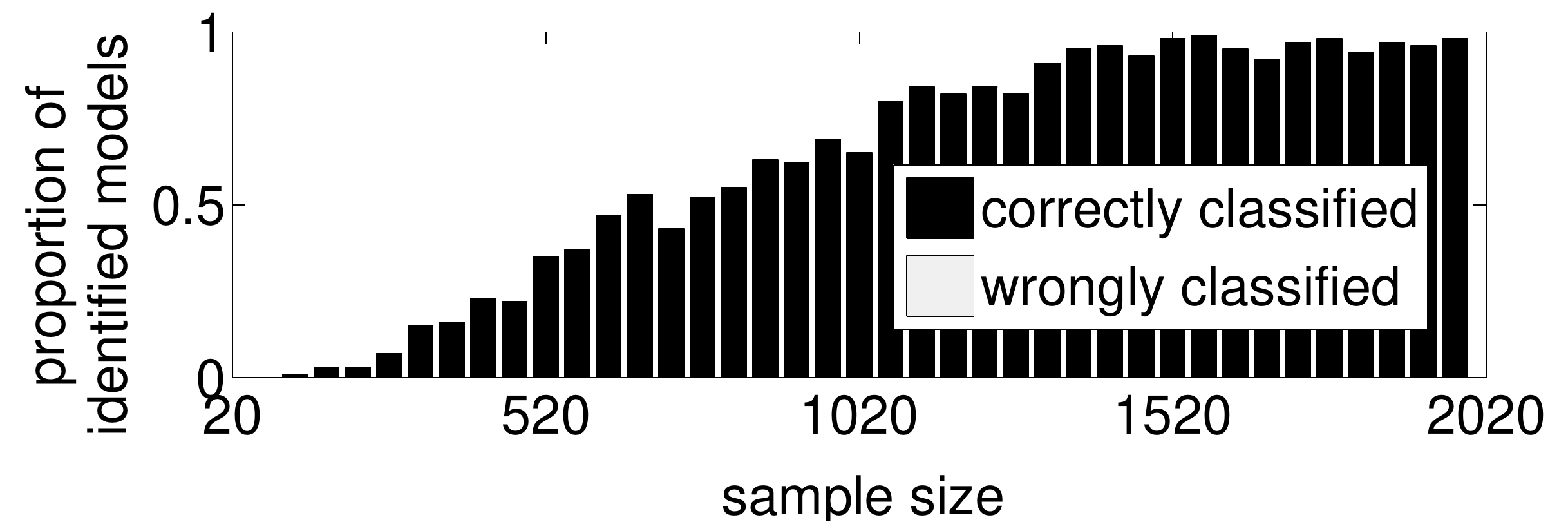}\vspace{0.2cm}\\
\includegraphics[width=0.9\linewidth]{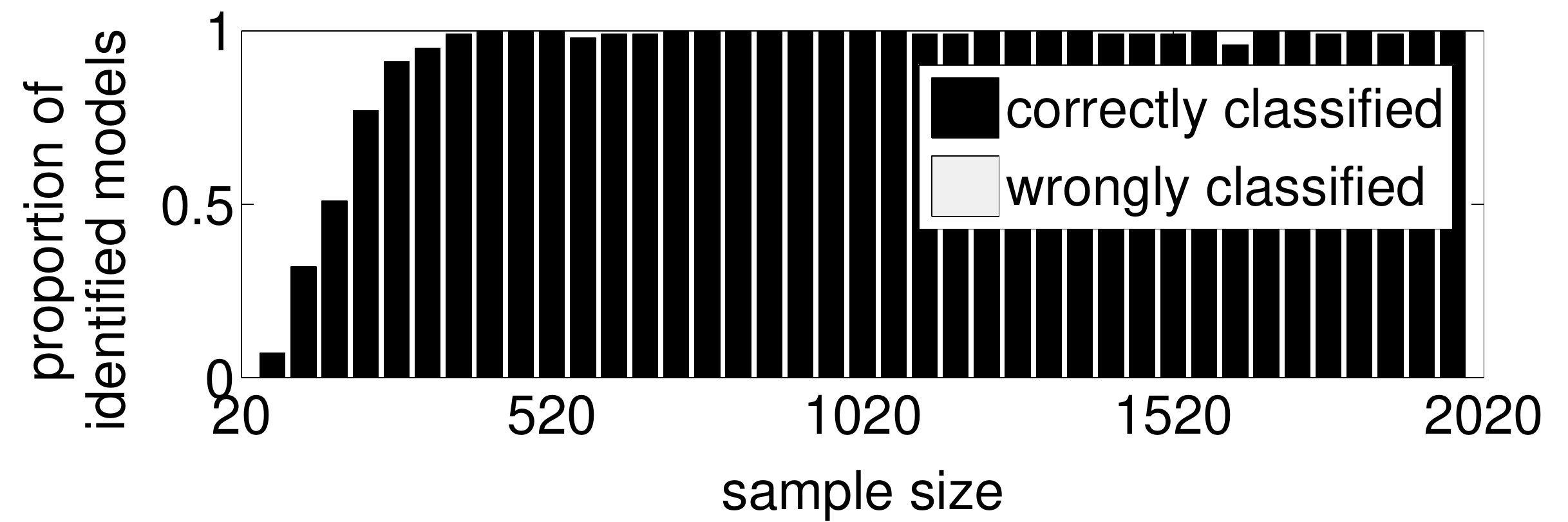}
\end{center}
\caption{For $r=0.58$ (top) and $r=0.68$ (bottom) the performance depending on the data size is shown. More data is needed if the true model is close to non-identifiable (top). In both cases the performance clearly increases with the sample size.}
\label{fig::2}
\end{minipage}
\end{figure}


{\bf Data set 3b (no direction is favored a priori).}\\
This experiment shows that our method does not favor one direction if the supports of the two random variables are very unequal in size. We choose two examples, where $m:=\#\mathcal X:=\#\supp X= 2$ and $\tilde m:=\# \mathcal Y:=\#\supp Y=10$. Since there are $2^{10}=1024$ function from $\mathcal Y$ to $\mathcal X$, but only $10^2=100$ functions from $\mathcal X$ to $\mathcal Y$ one could expect the method to favor models from $Y$ to $X$. We show that this is not the case.

For $m \neq \tilde m \in \{2,10\}$ and $m \neq \tilde m \in \{3,20\}$ we randomly choose distributions for $X$ and $N$ and a function $f$ and sampled 500 data points from this model. Table \ref{tab:33} shows that the algorithm detects the true direction in almost all cases (except if the model is non-identifiable).

\begin{table}[h]
\begin{center}
\begin{tabular}[b]{r|l||c|c|c|c}
    	$m$ & $\tilde m$ & correct dir.&  wrong dir. & ``both dir. poss.'' &  ``bad fit in both dir.'' \\ \hline \hline
       2 & 10 & 97.4\% & 0\% & 2.5\% & 0.1\%\\ \hline
      10 & 2 & 85.2\% & 0\% & 14.8\% & 0.0\%\\ \hline
      3 & 20 & 96.8\% & 0\% & 1.6\% & 1.6\%\\ \hline
      20 & 3 & 95.5\% & 0\% & 4.4\% & 0.1\%
\end{tabular}
\caption{Data Set 3b. The algorithm identifies the true causal direction in almost all cases. There is no evidence that the algorithm always favors one direction.}
\label{tab:33}
\end{center}
\end{table}

\subsection{Real Data.}
{\bf Data set 4 (abalone).}\\
We also applied our method to the {\tt abalone} data set \citep{Nash94} from the UCI Machine Learning Repository \citep{Asuncion}. We tested the sex $X$ of the abalone (male (1), female (2) or infant (0)) against length $Y_1$, diameter $Y_2$ and height $Y_3$, which are all measured in mm, and have $70, 57$ and $28$ different values, respectively. Compared to the number of samples (up to 4177) we treat this data as being discrete. Because we do not have information about the underlying continuous length we have to assume that the data structure has not been destroyed by the user-specific discretization. We regard $X\rightarrow Y_1$, $X\rightarrow Y_2$ and $X\rightarrow Y_3$ as being the ground truth, since the sex is probably causing the size of the abalone, but not vice versa. 

Clearly, the $Y$ variables do not have a cyclic structure. For the sex variable, however, the most natural model would be a structureless set which is contained in the cyclic constraints; for comparison we try both models for $X$. Our method with integer constraints is able to identify all 3 directions correctly. Since $X$ may be cyclic we also try to fit an ANM from $Y$ to $X$ with the cyclic constraints. Again, these models are rejected (see Table \ref{tab:4} and Figure \ref{fig:4}). We used $\alpha=5\%$ and the first 1000 samples of the data set.

As mentioned earlier it is not surprising that we would accept an ANM from $X$ to $Y$ even if we put cyclic constraints on $Y$ (which are certainly violated for this data set). We would obtain the following $p$-values: $p\mbox{-value}_{X\rightarrow Y1}=0.17, p\mbox{-value}_{X\rightarrow Y1}=0.19$ and $p\mbox{-value}_{X\rightarrow Y1}=0.05$. 

\begin{table}[h]
\begin{center}
{\small
\begin{tabular}[b]{c||c|c|c|c|c}
    	& $p\mbox{-value}_{X\rightarrow Y}$ & estimated function &  $p\mbox{-value}_{Y\rightarrow X}$ (non-cyclic) &  $p\mbox{-value}_{Y\rightarrow X}$ (cyclic)\\ \hline \hline
      $Y_1$& $0.17$ & $0\mapsto 39, 1\mapsto 51, 2 \mapsto 53$& $3\cdot 10^{-14}$&$3\cdot 10^{-2}$\\ \hline
      $Y_2$ &$0.19$ & $0\mapsto 30, 1\mapsto 41, 2 \mapsto 43$& $2\cdot 10^{-14}$&$4\cdot 10^{-3}$\\ \hline
      $Y_3$ &$0.05$ & $0\mapsto 10, 1\mapsto 14, 2 \mapsto 15$& $0$              &$1\cdot 10^{-8}$
\end{tabular}}
\caption{Data Set 4. The algorithm identifies the true causal direction in all 3 cases. Here, we assumed a non-cyclic structure on $Y$ and tried both cyclic and non-cyclic for $X$.}
\label{tab:4}
\end{center}
\end{table}

\begin{figure}[h]
\includegraphics[width=0.49\linewidth]{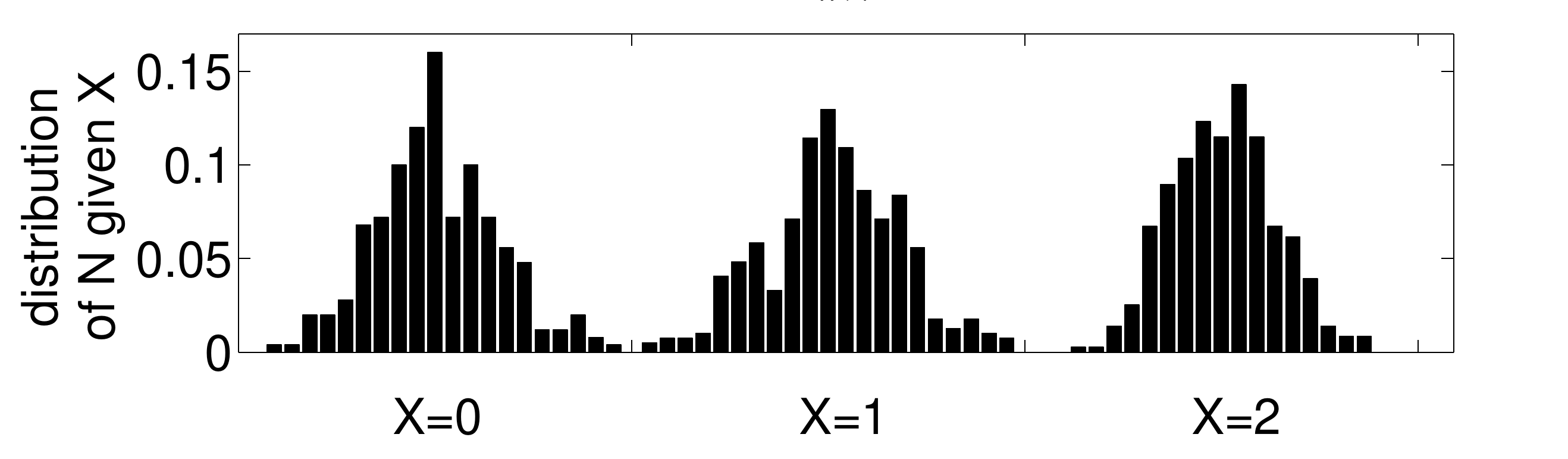}\hspace{0.02\linewidth}
\includegraphics[width=0.49\linewidth]{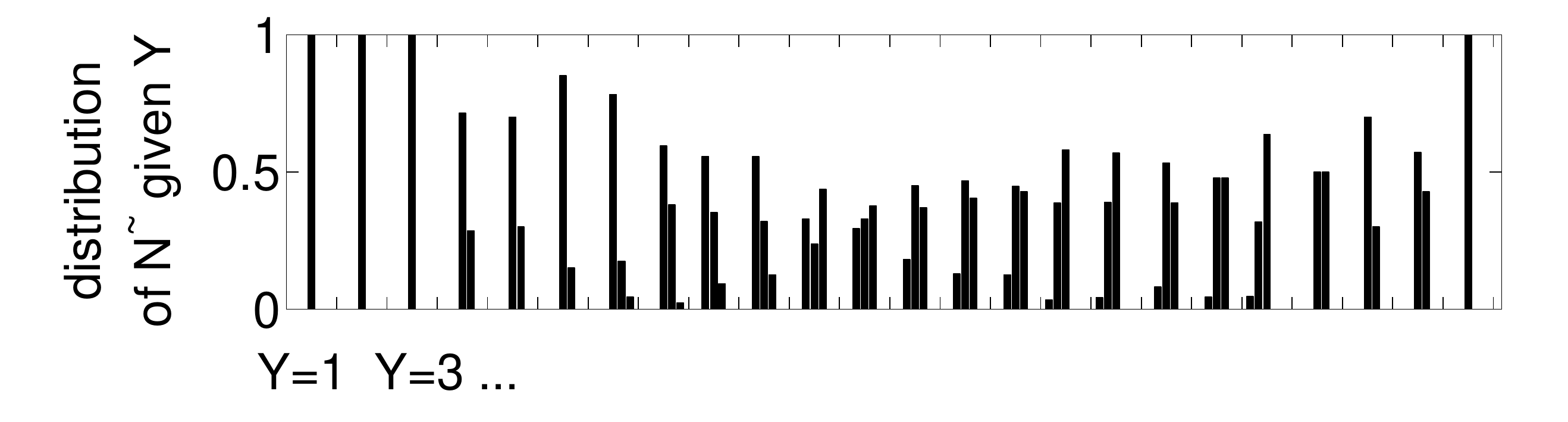}
\caption{Data Set 4. For $Y_3$ regressing on $X$ (left) and vice versa (right) the plot shows the conditional distribution of the fitted noise given the regressor. If the noise is independent, then the distribution must not depend on the regressor state. Clearly, this is only the case for $X\rightarrow Y_3$ (left), which corresponds to the ground truth.}
\label{fig:4}
\end{figure}

For this data set the method proposed by \citep{SunLauderdale} returns a slightly higher likelihood for the true causal directions than for the false directions, but this difference is so small, that the algorithm does not consider it to be significant.

The abalone data set also shows that working with $p$-values requires some carefulness. For synthetic data sets that we simulate from one fixed model the $p$-values do not depend on the data size. In real world data, however, this often is the case. If the data generating process does not exactly follow the model we assume, but is reasonable close to it, we get good fits for moderate data sizes. Only including more and more data reveals the small difference between process and model, which therefore leads to small $p$-values. Figure \ref{tab:data_size_p} shows how the $p$-values vary if we include the first $n$ data points of the abalone data set (in total: 4177). One can see that although the $p$-values for the correct direction decrease they are clearly preferable to the $p$-values of the wrong direction.
This is a well-known problem in applied statistics that also has to be considered using our method.

\begin{figure}[h]
\begin{center}
\includegraphics[width=0.32\linewidth]{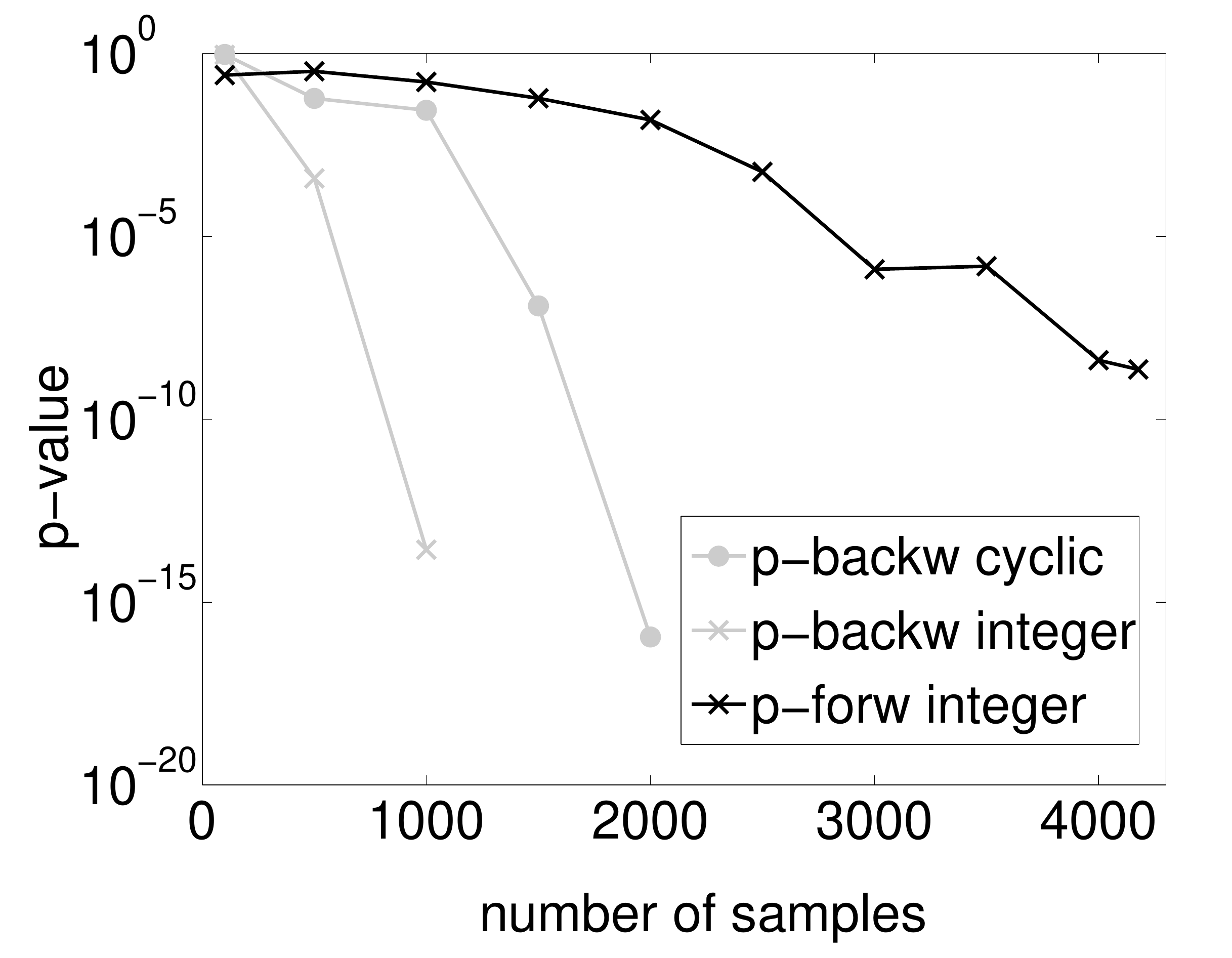}\hspace{0.01\linewidth}
\includegraphics[width=0.32\linewidth]{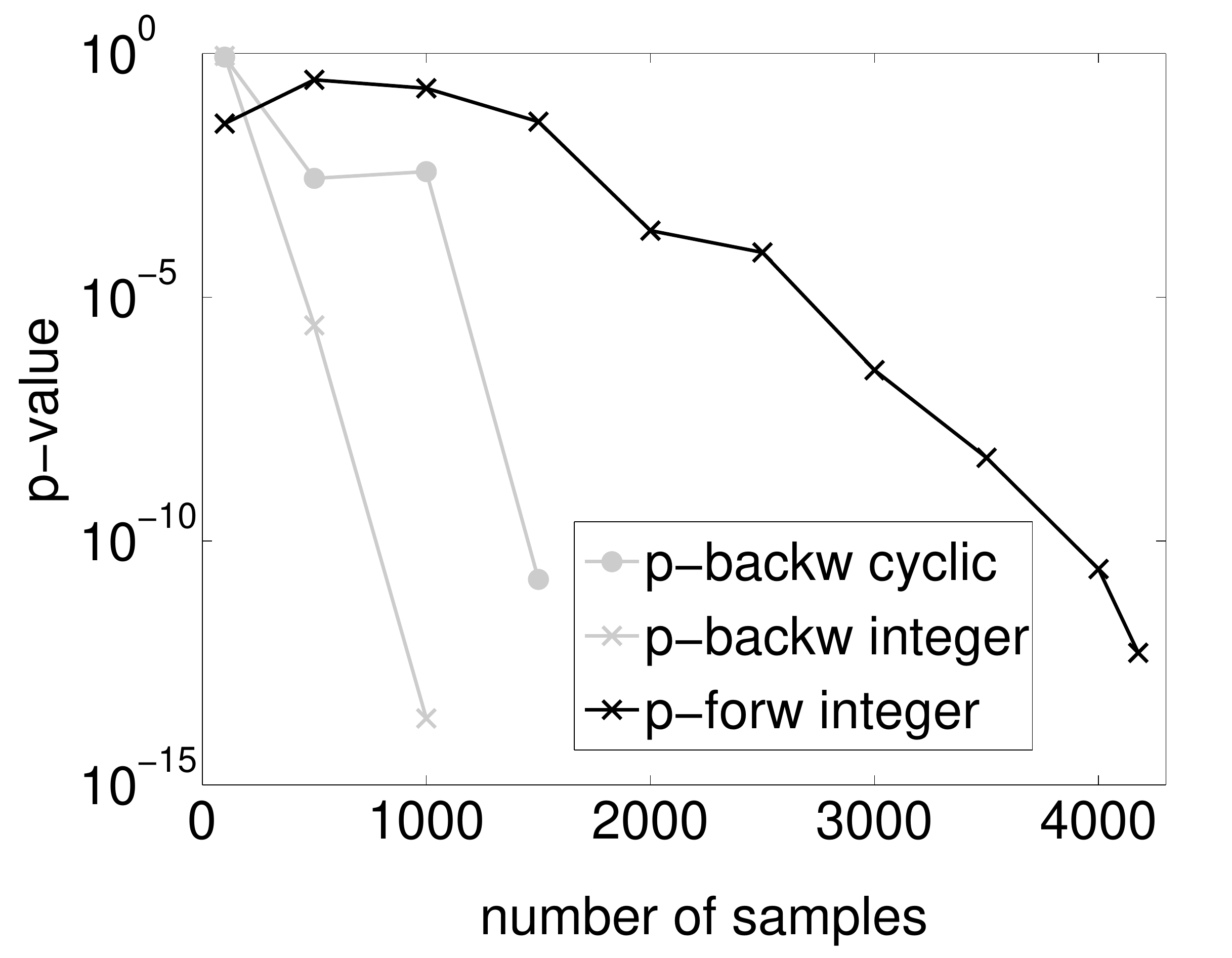}\hspace{0.01\linewidth}
\includegraphics[width=0.32\linewidth]{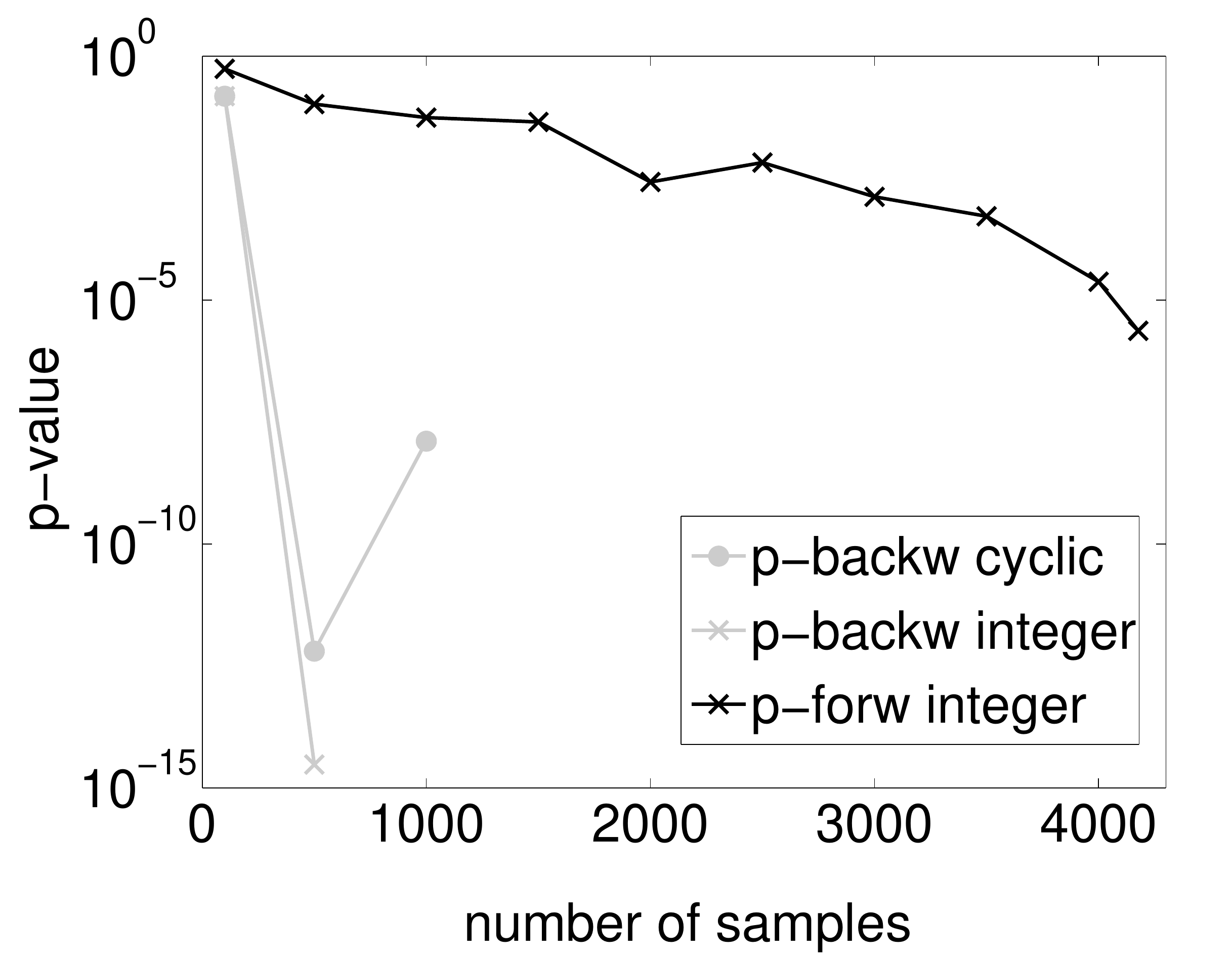}
\end{center}
\caption{Data Set 4. The plots show $p$-values of forward and backward direction depending on the number of samples we included (no data point means $p=0$). The $p$-value in the correct direction is eventually lower than any reasonable threshold. Nevertheless we prefer this direction since it is decreasing much more slowly than $p$-backward.}
\label{tab:data_size_p}
\end{figure}

{\bf Data set 5 (temperature).}\\
We further applied our method to a data set consisting of $9162$ daily values of temperature measured in Furtwangen (Germany) \citep{janzing_temp} using the variables temperature ($T$, in $^{\circ}C$) and month ($M$). Clearly $M$ inherits a cyclic structure, whereas $T$ does not. Since the month indicates the position of the earth relatively to the sun, which is surely causing the temperature on earth, we take $M \rightarrow T$ as the ground truth. Here, we aggregate states and use months instead of days. This is done in order to meet Cochran's condition and get reliable results from the independence test; it is not a scaling problem of our method (if we do not aggregate the method returns $p_{\mathrm{days}\rightarrow T}=0.9327$ and $p_{T\rightarrow \mathrm{days}}=1.0000$).

For $1000$ data points both directions are rejected ($p\mbox{-value}_{M\rightarrow T}=2e-4$, $p\mbox{-value}_{T\rightarrow M}=1e-13$). Figure \ref{fig:temp} shows, however, that again the $p\mbox{-values}_{M\rightarrow T}$ are decreasing much slower than $p\mbox{-values}_{T\rightarrow M}$ thus using other criteria than simple $p$-values we still may prefer this direction and propose it as the true one.

\begin{figure}[h]
\begin{center}
\includegraphics[width=0.4\linewidth]{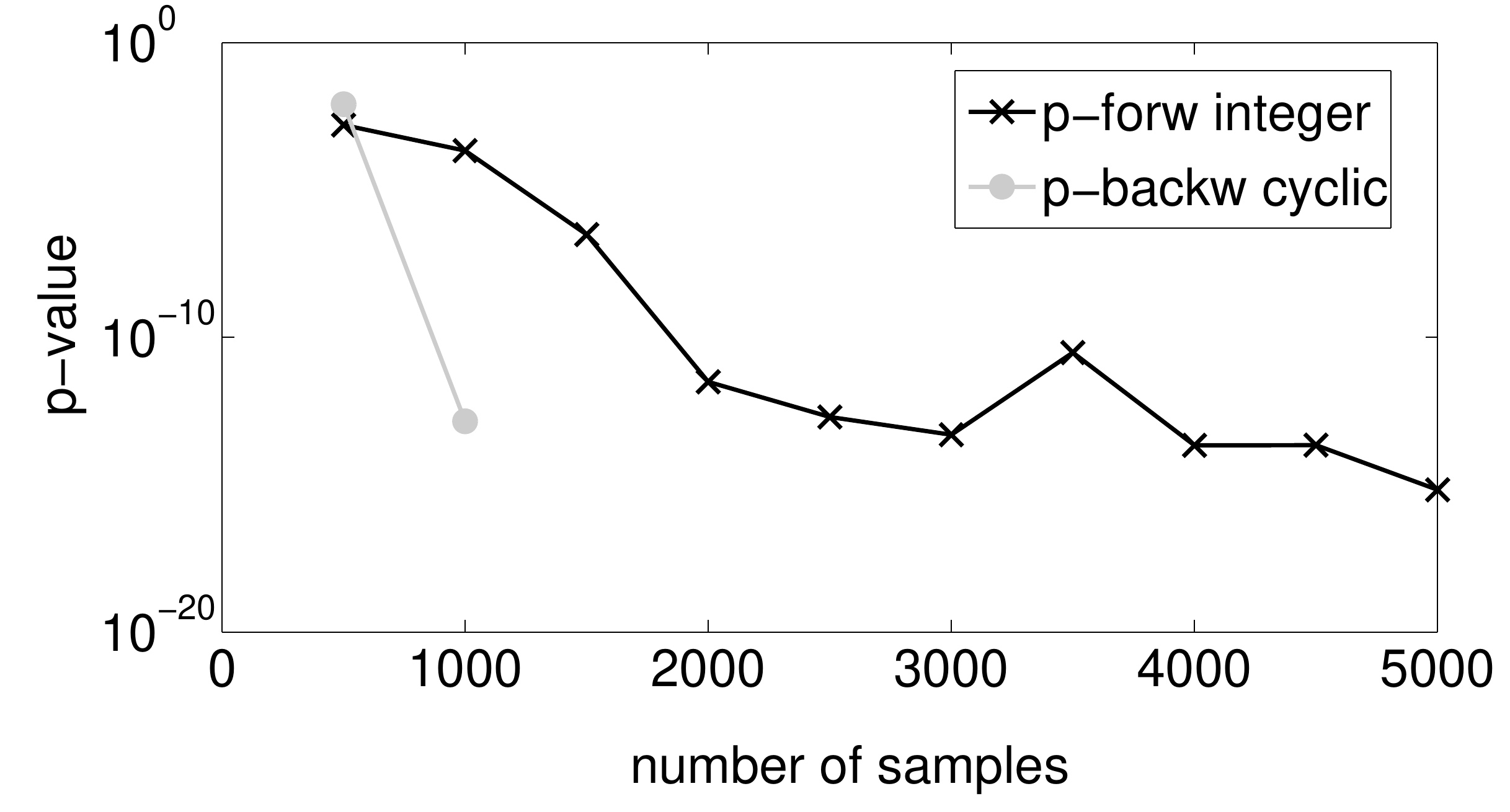}
\end{center}
\caption{Data Set 5. The plots show $p$-values of forward and backward direction depending on the number of samples we included (no data point means $p=0$). Again we prefer the correct direction since the $p$-values are decreasing much more slowly than $p$-backward.}
\label{fig:temp}
\end{figure}

\section{Conclusions and Future Work}
We proposed a method that is able to infer the cause-effect relationship between two discrete random variables. We proved that for generic choices the direction of a discrete additive noise model is identifiable in the population case and we developed an efficient algorithm that is able to infer the causal relationship between two variables for a finite amount of data. Since it is known that $\chi^2$ fails for small data sizes, changing the independence test for those cases may lead to an even higher performance of the algorithm.



Our method can be generalized in two directions: (1) handling more than two variables is straightforward from a practical point of view (although one may have to introduce regularization to make the regression computationally feasible) and (2) it should be investigated how our procedure can be applied to the case, where one variable is discrete and the other continuous. Corresponding identifiability results remain to be shown.

In future work additive noise models should be tested on more real world data sets in order to support (or disprove) additive noise models as a principle in causal inference. Furthermore we hope that more fundamental and general principles for identifying causal relationships will be developed that cover additive noise models as a special case. Nevertheless we regard our work as an important step towards understanding the difference between cause and effect.

\subsection*{Appendix}

%
\subsubsection*{A~~ Proof of Theorem \ref{yfinite}}
\begin{proof}
\begin{itemize}
\item[$\Rightarrow$:]
First we assume $\supp Y=\{y_0, \ldots, y_m\}$ with $y_0 < y_1 < \ldots < y_m$. 
Define the non-empty sets $\tilde C_i := \supp X|Y=y_i$, for $i=1, \ldots, m$. That means $\tilde C_1, \ldots, \tilde C_m \subset \supp X$ are the smallest sets satisfying $\prob(X \in \tilde C_i\,|\,Y=y_i)=1$. For all $i,j$ it follows that 
\begin{equation} \label{prop_c}
\tilde C_i=\tilde C_j \mbox{ or } \tilde C_i \cap \tilde C_j=\emptyset \mbox{ and } f\mid_{\tilde C_i}=\tilde c_i=\mathrm{const}.
\end{equation}
This is proved by an induction argument:\\
Base step: At first consider $\tilde C_m$ corresponding to the largest value $y_m$ of $\supp Y$. Assuming $f(x_1)<f(x_2)$ for $x_1, x_2 \in \tilde C_m$ leads to 
$$
y_m=f(x_1)+N_{\mathrm{max}}<f(x_2)+N_{\mathrm{max}}=y_m
$$
and therefore to a contradiction.\\
Induction step: Now consider $\tilde C_k$ and assume properties \eqref{prop_c} are satisfied for all $\tilde C_{\tilde k}$ with $k<\tilde k\leq m$. 
\begin{align*}
&\; x \in \tilde C_k \cap \tilde C_{\tilde k}\\
\Rightarrow &\; \prob(N=y_k-f(\tilde x))=\prob(N=y_k-f(x))>0 \quad \forall \tilde x \in \tilde C_{\tilde k}\\
\Rightarrow &\; \tilde C_{\tilde k} \subset \tilde C_k\\
\Rightarrow &\; \tilde C_{\tilde k} = \tilde C_k \quad \mbox{ (since $\supp \tilde N$ must have the same size for all $y$)}\\
\Rightarrow &\; f\mid_{\tilde C_k}=f\mid_{\tilde C_{\tilde k}}=\mathrm{const}
\end{align*}
Furthermore
\begin{align*}
&\; \tilde C_k \cap \tilde C_{\tilde k} = \emptyset \quad \forall k<l\leq m\\
\Rightarrow &\; f\mid_{\tilde C_k}=\mathrm{const}
\end{align*}
using the same argument as for $C_m$.\\
Thus we can choose some sets $C_0, \ldots, C_l$ from $\tilde C_0, \ldots, \tilde C_m$, where $l \leq m$, such that $C_0, \ldots, C_l$ are disjoint, and $c_k:=f(C_k)$ are pairwise different values. Wlog assume $C_0=\tilde C_0$. Further, even the sets
$$
c_k+\supp N :=\{c_k+h\,:\,\prob(N=h)>0\}
$$ 
are pairwise different: If $y_i=c_k+h_1=c_l+h_2$ then $C_k \subset \tilde C_i$ and $C_l \subset \tilde C_i$, which implies $k=l$.

Now we consider the other case, namely that $X$ has finite support. Then we define $C_0, \ldots, C_l$ to be disjoint sets, such that $f$ is constant on each of them: $c_i:=f(C_i)$. This time, it does not matter which of these sets is called $C_0$. Since 
$$
c_k+\supp N = \supp Y|X \in C_k
$$ 
we can use the same argumentation we used for $\tilde C_i$ (exchange roles of $X$ and $Y$) and deduce again that the sets $c_k+\supp N$ are disjoint. 

The rest of the proof is valid for both cases (either $X$ or $Y$ has finite support):\\
Consider $C_i$ for any $i$. According to the assumption that an additive noise model $Y\rightarrow X$ holds we have
\begin{align*}
\tilde N | Y=c_0 &\overset{d}{=} \tilde N | Y=c_i\\
\Leftrightarrow \quad X-g(c_0) | Y=c_0 &\overset{d}{=} X-g(c_i) | Y=c_i\\
\Rightarrow \quad \, \quad X +d_i | Y=c_0 &\overset{d}{=} X | Y=c_i
\end{align*}
with $d_i=g(c_i)-g(c_0)$. Thus $C_i=C_0+d_i$ (including $d_0=0$).

For $x \in C_i$ (which implies $f(x)=c_i$) 
we have
\begin{align*}
\frac{\prob(X=x)}{\prob(X \in C_i)}&=\frac{\prob(X=x)\prob(N=c_i-f(x))}{\sum_{\tilde x \in C_i}\prob(X=\tilde x)\prob(N=c_i-f(\tilde x))}\\
&=\frac{\prob(X=x,N=c_i-f(x))}{\prob(Y=c_i)}
=\prob(X=x\,|\,Y=c_i)\\
&=\prob(X=x-d_i\,|\,Y=c_0)
=\frac{\prob(X=x-d_i,N=c_0-f(x-d_i))}{\prob(Y=c_0)}\\
&=\frac{\prob(X=x-d_i)\prob(N=c_0-f(x-d_i))}{\sum_{\tilde x \in C_0}\prob(X=\tilde x)\prob(N=c_0-f(\tilde x))}
=\frac{\prob(X=x-d_i)}{\prob(X \in C_0)}
\end{align*}

\item[$\Leftarrow$:]
In order to show that we have an reversible ANM we define the function $g$ as follows
\begin{align*}
g(y) &= 0 \qquad \forall y \in c_0+\supp N\\
g(y) &= d_i \qquad \forall y \in c_i+\supp N, \, i>0
\end{align*}
The noise $\tilde N$ is determined by the joint distribution $\prob^{(X,Y)}$, of course. It remains to check, whether the distribution of
$\tilde N |Y=y$ is independent of $y$. Consider a fixed $y$ and choose $i$ such that $y \in c_i +\supp N$. Since $C_i=C_0+d_i$ the condition $g(y)+h\in C_i$ is satisfied for all $h \in C_0$ and therefore independently of $y$ and $c_i$. If $g(y)+h \notin C_i$ then $\prob(\tilde N=h\,|\,Y=y)=0$. And if $g(y)+h \in C_i$ we have
\begin{align*}
\prob(\tilde N=h\,|\,Y=y)&=\frac{\prob(X=g(y)+h, Y=y)}{\prob(Y=y)}\\
&=\frac{\prob(X=g(y)+h,N=y-f(g(y)+h))}{\prob(Y=y)}\\
&=\frac{\prob(X=g(y)+h)\prob(N=y-c_i)}{\sum_{\tilde x \in C_i}\prob(X=\tilde x)\prob(N=y-f(\tilde x))}\\
&=\frac{\prob(X=g(y)+h)}{\prob(X \in C_i)}=\frac{\prob(X=g(y)+h-d_i)}{\prob(X \in C_0)}\\
&=\frac{\prob(X=h)}{\prob(X \in C_0)}
\end{align*}
which does not depend on $y$.  
\end{itemize}
\end{proof}

\subsubsection*{B~~ Proof of Theorem \ref{xyinfinite}}
\begin{proof}
\begin{enumerate}
\item $\prob(N=k)>0 \, \forall \, m\leq k \leq l$ and $\prob(N=k)=0$, else.
\begin{itemize}
\item[$\Rightarrow$:]
\begin{figure}[h]
\begin{tikzpicture}[scale=0.38,inner sep=0.55mm]
  \draw[gray,very thin] (-0.3,-0.3) grid (29.3,9.3);
  \draw[->] (-1,0) -- (29.5,0) node[right] {$X$};
  \draw[->] (0,-1) -- (0,9.5) node[above] {$Y$};
  \foreach \i in {0,1,2,3,4,5}
  \foreach \j in {1,2,3}
    {\fill (5*\i+1,\j+\i+1) circle (0.2cm);
    \fill (5*\i+2,\j+\i) circle (0.2cm);
    \fill (5*\i+3,\j+\i) circle (0.2cm);
    \fill (5*\i+4,\j+\i+1) circle (0.2cm);}
  \draw[shift={(1,0)}] (0pt,2pt) -- (0pt,-2pt) node[below] {$x_1$};
  \draw[shift={(16,0)}] (0pt,2pt) -- (0pt,-2pt) node[below] {$x_2$};
  \draw[shift={(6,0)}] (0pt,2pt) -- (0pt,-2pt) node[below] {$\hat x_1$};
  \draw[shift={(21,0)}] (0pt,2pt) -- (0pt,-2pt) node[below] {$\hat x_2$};
  \draw[shift={(0,4)}] (-2pt,0pt) -- (2pt,0pt) node[left] {$f(x_1)+N_{\max}$};
  \draw[shift={(0,2)}] (-2pt,0pt) -- (2pt,0pt) node[left] {$f(x_1)$};
  \draw[shift={(0,5)}] (-2pt,0pt) -- (2pt,0pt) node[left] {$f(x_2)$};
  \draw[shift={(0,7)}] (-2pt,0pt) -- (2pt,0pt) node[left] {$f(x_2)+N_{\max}$};
  \draw (0.6,3.6) -- (28.4,3.6) -- (28.4,6.4) -- (0.6,6.4) -- (0.6,3.6);
  \draw (1,4) circle (0.4cm);
  \draw (4,4) circle (0.4cm);
  \draw (7,4) circle (0.4cm);
  \draw (8,4) circle (0.4cm);
  \draw (16,5) circle (0.4cm);
  \draw (19,5) circle (0.4cm);
  \draw (22,5) circle (0.4cm);
  \draw (23,5) circle (0.4cm);
  \node[circle,fill] (a) at (1,4) {};
  \node[circle,fill] (b) at (6,4) {};
  \node[circle,fill] (c) at (6,5) {};
  \node[circle,fill] (d) at (16,5) {};
  \node[circle,fill] (e) at (16,7) {};
  \path (a) edge [->,bend left=40] (b)
  (b) edge [->,bend right=40] (c)
  (c) edge [->,bend left=50] (d)
  (d) edge [->,bend right=40] (e);
\end{tikzpicture}
\label{erkl_beweis}
\end{figure}
Assume that there is an ANM in both directions $X \rightarrow Y$ and $Y\rightarrow X$. As mentioned above we have a freedom of choosing an additive constant for the regression function. In the remainder of this proof we require $\prob(N=k)=\prob(\tilde N=k)=0 \, \forall k<0$ and $\prob(\tilde N=0), \prob(N=0)>0$. The largest $k$, such that $\prob(N=k)>0$ will be called $N_{\max}$. In analogy to the proof above we define $C_y := \supp X|Y=y$ for all $y \in \supp Y$. \\
At first we note that all $C_y$ are shifted versions of each other (since there is a backward ANM) and additionally, they are finite sets. 
Otherwise this would contradict the assumptions because of the compact support of $N$.

Start with any arbitrary $x_1=\min\{f^{-1}(f(x_1))\}$ and define 
$$
\hat x_1:= \min \big\{x \in C_{f(x_1)+N_{\max}} \setminus f^{-1}(f(x_1))\big\}
$$
This implies $f(\hat x_1)>f(x_1)$ and $x_1 \in C_{f(\hat x_1)}$.
\par
\begingroup
\leftskip=1.2cm 
\noindent
If such a $\hat x_1$ does not exist because the set on the right hand side is empty, then it cannot exist for any choice of $x_1$: It is clear that $C_{f(x_1)+N_{\max}}=f^{-1}(f(x_1))$ and then we consider the first $C_{f(x_1)+N_{\max}+i}$ that is not empty. Then this set must be $f^{-1}(f(\hat x_1))$ for some $\hat x_1$. This leads to an iterative procedure and to the required decomposition of $\supp X$.
\par
\endgroup
We have that either $\max \{f^{-1}(f(\hat x_1))\} > \max \{f^{-1}(f(x_1))\}$ or $\min \{f^{-1}(f(\hat x_1))\} < \min \{f^{-1}(f(x_1))\}$: Otherwise $C_{f(\hat x_1)}$ and $C_{f(\hat x_1)-1}$ satisfy
\begin{align*}
\max C_{f(\hat x_1)-1} &\geq \max C_{f(\hat x_1)}\\
\min C_{f(\hat x_1)-1} &\leq \min C_{f(\hat x_1)}
\end{align*}
Because of $\hat x_1 \in C_{f(\hat x_1)}, \hat x_1 \notin C_{f(\hat x_1)-1}$ this contradicts the existence of an backward additive noise model. Wlog we therefore assume $\max \{f^{-1}(f(\hat x_1))\} > \max \{f^{-1}(f(x_1))\}$. Then we even have $\hat x_1>x_1$,
$$ 
x_1=\min\{C_{f(x_1)+N_{\max}}\}
$$
and
$$ 
\hat x_1=\min\{C_{f(x_1)+N_{\max}+1}\}
$$
(Otherwise we can use the same argument as above with $C_{f(x_1)+N_{\max}}$ and $C_{f(x_1)+N_{\max}+1}$.)
Define further
$$
x_2:= \min f^{-1}(f(x_1)+N_{\max}+1)
$$
Since $f^{-1}(f(x_1)) \subset C_{f(x_1)+N_{\max}}$, but $f^{-1}(f(x_1)) \cap C_{f(x_1)+N_{\max}+1}=\emptyset$, such a value must exist. Again, we can define $\hat x_2$ in the same way as above.

Set $y_1:=f(x_1)+N_{\max}$ and $\hat y_1:=f(x_1)+2\cdot N_{\max}$ and consider the finite box from $(\min C_{y_1}, y_1)$ to $(\max C_{y_2}, y_2)$. This box contains all the support from $X\,|\,Y=f(x_1)+N_{\max}+i$, where $i=1, \ldots, N_{\max}$. Assume we know the positions in this box, where $\prob^{(X,Y)}$ is greater than zero. Then this box determines the support of $X\,|\,Y= f(x_1)+2\cdot N_{\max}+1$ (the line above the box) just using the support of $N$ and $\tilde N$. Iterating gives us the whole support of $\prob^{(X,Y)}$ in the box above (from $y=f(x_2)+N_{\max}$ to $y=f(x_2)+2\cdot N_{\max}$). Since the width of the boxes are bounded by $3\cdot \max C_{f(x_1)}-\min C_{f(x_1)}$, for example, at some point the box of $x_n$ must have the same support as the one of $x_1$. Figure \ref{erkl_beweis} shows an example, in which $n=2$. Using the whole distributions of $N$ and $\tilde N$ we can now determine a factor $\alpha$ with
$$
\prob(X=x_1, Y=f(x_1)+N_{\max})=\alpha \cdot \prob(X=x_n, Y=f(x_n)+N_{\max})
$$
over the sequence $(x_1, \hat x_1, x_2, \hat x_2, \ldots, x_n)$. But since we computed the boxes in a deterministic way, the same $\alpha$ satisfies 
$$
\prob(X=x_n, Y=f(x_n)+N_{\max})=\alpha \cdot \prob(X=x_{2n-1}, Y=f(x_{2n-1})+N_{\max})
$$
and therefore 
$$
\prob(X=x_1, Y=f(x_1)+N_{\max})=\alpha^k \cdot \prob(X=x_{(k+1)n-k}, Y=f(x_{(k+1)n-k})+N_{\max})
$$
Note that a corresponding equation with the same constant $\alpha$ holds for the opposite direction. This leads to a contradiction, since there is no probability distribution for $X$ with infinite support that can fulfill this condition (no matter if $\alpha$ is greater, equal or smaller than $1$).

\item[$\Leftarrow$:]
This direction is proved in exactly the same way as in Theorem \ref{yfinite}. 
\end{itemize}

\item $\prob(N=k)>0 \, \forall \, k\in \Z$.\\
Since $X$ and $Y$ are dependent there are $y_1$ and $y_2$, such that $g(y_1)\neq g(y_2)$. Comparing $\prob(X=k, Y=y_1)$ and $\prob(X=k, Y=y_2)$ for $k\geq m$, we can identify the difference $d:=g(y_2)-g(y_1)$. If $d<0$ we can use $\frac{\prob(X=m-d-1,Y=y_1)}{\prob(X=m-d,Y=y_1)}$ and $\prob(X=m,Y=y_2)$ in order to determine $\prob(X=m-1,Y=y_2)$. If $d>0$ we use $\frac{\prob(X=m+d-1,Y=y_2)}{\prob(X=m+d,Y=y_2)}$ and $\prob(X=m,Y=y_1)$ in order to determine $\prob(X=m-1,Y=y_1)$.
In both cases this yields $\prob(X=m-1)$ and with an iterative procedure all $\prob(X=x)$.
\end{enumerate}
\end{proof}

\subsubsection*{C~~ Proof of Theorem \ref{id_cyclic}}
regarding $(i)$: Each distribution $Y\,|\,X=x_j$ has to have the same support (up to an additive shift) and thus the same number of elements with probability greater than $0$:
$$
\#\supp X\cdot \#\supp N=k\cdot\#\supp Y 
$$

\noindent For $(ii)$ we now consider 3 different cases and show necessary conditions for reversibility each.
\begin{enumerate}
\item[Case 1:] $f$ and $g$ are bijective.
\begin{prop} \label{thm:bij}
Assume $Y=f(X)+N, \, N \independent X$ for bijective $f$ and $n(l)\neq 0, p(k)\neq 0 \,\forall k,l$. If the model is reversible with a bijective $g$, then $X$ and $Y$ are uniformly distributed.
\end{prop}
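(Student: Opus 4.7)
The plan is to exploit the reversibility equation (\ref{eq:id}),
$$p(x)\,n\bigl(y-f(x)\bigr)=q(y)\,\tilde n\bigl(x-g(y)\bigr) \qquad \forall x,y,$$
in combination with the bijectivity of $f$ and $g$. The strategy is: take products of this identity over all elements of $\Z/m\Z$, separately in $x$ (with $y$ fixed) and in $y$ (with $x$ fixed). Bijectivity will let me reindex the products so that they collapse to constants, and the resulting equalities will force both marginals to be constant.

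First I would observe that bijectivity of $f:\Z/m\Z\rightarrow\Z/\tilde m\Z$ requires $m=\tilde m$. Since $n(l)>0$ and $p(k)>0$ for all values, no factor in (\ref{eq:id}) vanishes, so all products below are well defined and nonzero. Fix $y$ and take the product of (\ref{eq:id}) over all $x\in\Z/m\Z$. On the left the factors $n(y-f(x))$ run, via the bijection $x\mapsto y-f(x)$, over all residues, so $\prod_x n(y-f(x))=\prod_l n(l)$, which is independent of $y$. Thus the left-hand side equals $\bigl(\prod_x p(x)\bigr)\cdot\bigl(\prod_l n(l)\bigr)$, a constant. On the right, $\prod_x \tilde n(x-g(y))=\prod_k \tilde n(k)$ is also independent of $y$, leaving $q(y)^m\cdot\bigl(\prod_k \tilde n(k)\bigr)$. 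Equating, $q(y)^m$ is constant in $y$, so $q(y)\equiv 1/m$ and $Y$ is uniform.

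Next I would repeat the argument symmetrically: fix $x$ and take the product of (\ref{eq:id}) over all $y\in\Z/m\Z$. Now on the right, bijectivity of $g$ makes $y\mapsto x-g(y)$ a bijection onto $\Z/m\Z$, so $\prod_y \tilde n(x-g(y))=\prod_k \tilde n(k)$ is independent of $x$. On the left, the bijection $y\mapsto y-f(x)$ gives $\prod_y n(y-f(x))=\prod_l n(l)$. Equating the two sides yields $p(x)^m\cdot\bigl(\prod_l n(l)\bigr)=\bigl(\prod_y q(y)\bigr)\cdot\bigl(\prod_k \tilde n(k)\bigr)$, which is constant in $x$, so $p(x)\equiv 1/m$ and $X$ is uniform.

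I do not anticipate a real obstacle: the only subtlety is recognizing that both sides become constant for the correct reason, namely that composing a bijection with a cyclic shift is still a bijection, so that the products over $n$ and $\tilde n$ lose their dependence on the free variable. Everything else is bookkeeping, and positivity of $p$, $n$ guarantees we never divide by or take the root of zero.
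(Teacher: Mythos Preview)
Your proof is correct. It differs from the paper's argument in execution, though both ultimately exploit the same structural fact. The paper first forms a ratio of two instances of (\ref{eq:id})---at $(x+1,y)$ and at $(x,t_y)$ where $t_y$ is chosen so that $g(t_y)=g(y)-1$---to isolate $p(x+1)/p(x)$, and then telescopes this over $x=0,\dots,m-1$; bijectivity of $f$ makes the resulting $n$-products in numerator and denominator coincide, forcing $q(y)=q(t_y)$, and the orbit of $y\mapsto t_y$ (a single $m$-cycle) yields uniformity of $Y$, with $X$ handled by symmetry. You instead multiply (\ref{eq:id}) itself over all $x$ (respectively all $y$) directly, so that the $n$- and $\tilde n$-products collapse to constants in one stroke, isolating $q(y)^m$ (respectively $p(x)^m$). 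Your route is shorter, avoids the auxiliary point $t_y$ and the telescoping, and makes the role of each bijectivity hypothesis more transparent: bijectivity of $f$ is what collapses $\prod_x n(y-f(x))$ in the first product, and bijectivity of $g$ is what collapses $\prod_y \tilde n(x-g(y))$ in the second.
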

  
\begin{proof}
Since $g$ is bijective we have that $\forall y \exists t_y\,:\, g(t_y)=g(y)-1$. It follows from \eqref{eq:id}
$$
\frac{n\big(y-f(x+1)\big)p(x+1)}{n\big(t_y-f(x)\big)p(x)}=\frac{\tilde n\big(x+1-g(y)\big)q(y)}{\tilde n\big(x+1-g(y)\big)q(t_y)}
$$
This implies 
\begin{equation*}
\frac{p(x+1)}{p(x)}=\frac{n\big(t_y-f(x)\big)q(y)}{n\big(y-f(x+1)\big)q(t_y)} \; \mbox{ and } \; 1=\frac{p(x+m)}{p(x)}=\frac{\prod_{k=0}^{m-1} n\big(t_y-f(x+k)\big)q(y)^m}{\prod_{k=0}^{m-1} n\big(y-f(x+k+1)\big)q(t_y)^m}
\end{equation*}
Since $f$ is bijective it follows that $q(y)=q(t_y)$. This holds for all $y$ and thus $Y$ and $X$ are uniformly distributed.
\end{proof}

\item[Case 2:] $g$ is not injective.\\
Assume $g(y_0)=g(y_1)$. 
From \eqref{eq:id} it follows that
\begin{equation}\label{eq:g_not_inj}
\frac{n\big(y_0-f(x)\big)}{n\big(y_1-f(x)\big)}=\frac{q(y_0)}{q(y_1)} \,  \forall x \; \mbox { and } \frac{n\big(y_0-f(x)\big)}{n\big(y_1-f(x)\big)}=\frac{n\big(y_0-f(\tilde x)\big)}{n\big(y_1-f(\tilde x)\big)} \; \forall x, \tilde x,
\end{equation}
which imply equality constraints on $n$. To determine the number of constraints we define a function that maps the arguments of the numerator to those of the denominator
$$
h_{y_0,y_1,f}: \begin{array}{rcl} \im(y_0-f) &\rightarrow &\Z/{\tilde m}\Z\\
y_0-f(x)&\mapsto&y_1-f(x)
\end{array}.
$$
We say $h$ has a cycle if there is a $z\in \N$, s.t. $h^k(a)=(h\circ \ldots \circ h)(a)\in \im(y_0-f) \, \forall k\leq z$ and $h^z(a)=a$. For example: $2\overset{h}{\mapsto}4\overset{h}{\mapsto}6\overset{h}{\mapsto}0\overset{h}{\mapsto}2$.
\begin{prop} \label{prop:g_not_inj}
Assume $Y=f(X)+N, \, N \independent X$ and $n(l)\neq 0, p(k)\neq 0 \,\forall k,l$. Assume further that the model is reversible with a non-injective $g$.
\begin{itemize}
\item If $h$ has only cycles, $\im f-\#cycles+1
$ parameters of $n$ are fixed.
\item Otherwise $\im f-\#cycles
$ parameters of $n$ are fixed.
\end{itemize}
\end{prop}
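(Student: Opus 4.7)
I would start from \eqref{eq:id} evaluated at the pairs $(x,y_0)$ and $(x,y_1)$, using $g(y_0)=g(y_1)$ to cancel the common factor $\tilde n(x-g(y_0))$. This yields the first identity of \eqref{eq:g_not_inj},
$$
\frac{n(y_0-f(x))}{n(y_1-f(x))}=\frac{q(y_0)}{q(y_1)}=:c \qquad \text{for all } x,
$$
which, upon introducing $h:a\mapsto a+(y_1-y_0)$, rewrites as $n(h(a))=n(a)/c$ for every $a\in\im(y_0-f)$. Positivity of $n$ and $p$ forces $c>0$. The second identity of \eqref{eq:g_not_inj} is then immediate, since the left-hand side has just been shown to depend only on $y_0,y_1$.

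The key combinatorial input is that $h$ is a cyclic shift on $\Z/\tilde m\Z$, hence injective; iterating $h$ from any seed in $\im(y_0-f)$ therefore produces an orbit that either closes up to a cycle (in the sense of the paragraph preceding the proposition) or eventually leaves $\im(y_0-f)$ as a chain. Injectivity guarantees that these orbits partition $\im(y_0-f)$ into $k=\#\text{cycles}$ cycles of lengths $z_1,\dots,z_k$ and, in the general case, a collection of chains of lengths $r_1,\dots,r_l$ with $\sum_i z_i+\sum_j r_j=\#\im f$.

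On a cycle of length $z_i$, iterating $n(h(a))=n(a)/c$ a total of $z_i$ times returns to $a$ and forces $c^{z_i}=1$; together with $c>0$ this gives $c=1$. The relation then degenerates to $n(h(a))=n(a)$ on the cycle, pinning its $z_i$ values of $n$ to a common constant, i.e.\ $z_i-1$ independent equalities. If every orbit is a cycle (first bullet), the scalar equation $c=1$, equivalently $q(y_0)=q(y_1)$, is a genuinely additional constraint that ties $n$ to $p$ through $q$, producing the total $\sum_i(z_i-1)+1=\#\im f-k+1$. In the mixed case (second bullet), the presence of any cycle already forces $c=1$, and each chain of length $r_j$ then contributes $r_j$ independent equalities (its $r_j+1$ node values must all coincide); in the chain-only subcase the relations $n(h^i(a_0))=n(a_0)\,c^{-i}$ still yield $r_j$ equalities per chain once the single shared parameter $c$ is absorbed. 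Both subcases sum to $\sum_i(z_i-1)+\sum_j r_j=\#\im f-k$.

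The hardest step is the bookkeeping in the mixed regime: one must verify that the cycle-induced equation $c=1$ is independent of the within-cycle equalities \emph{only} when no chain is present (otherwise a chain already determines $c$), and that along each chain the $r_j$ relations really are independent, which rests on $h$ being injective and on $n$ being strictly positive. Everything else is local orbit-by-orbit analysis organised by the partition above.
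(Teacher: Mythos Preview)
Your approach is the same as the paper's: both derive \eqref{eq:g_not_inj} from \eqref{eq:id} using $g(y_0)=g(y_1)$, rewrite it as the recursion $n(h(a))=n(a)/c$ along the shift $h$, decompose $\im(y_0-f)$ into $h$-orbits (cycles and chains), use cycle closure $c^{z}=1$ with $c>0$ to force $c=1$, and then tally constraints orbit by orbit. The paper phrases the per-orbit counts slightly differently ($r-1$ for chains, $r-2$ for cycles, with the normalisation $\sum_l n(l)=1$ added in explicitly) and also records the $p$--$n$ coupling through $q$ as a separate observation, but the skeleton is identical.

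There is one genuine gap in your bookkeeping for the only-cycles case. You argue that the ``$+1$'' comes from $q(y_0)=q(y_1)$ being a genuinely additional constraint tying $n$ to $p$. It is not: once the within-cycle equalities $n\big(y_0-f(x)\big)=n\big(y_1-f(x)\big)$ hold for every $x$ (which is exactly what $c=1$ on every orbit means in the only-cycles regime), one gets
\[
q(y_0)=\sum_x p(x)\,n\big(y_0-f(x)\big)=\sum_x p(x)\,n\big(y_1-f(x)\big)=q(y_1)
\]
for \emph{any} $p$, so this equation is already implied by your $\sum_i(z_i-1)$ orbit constraints and cannot supply the missing unit. The paper obtains its extra constraint from the normalisation $\sum_l n(l)=1$, which it adds to the orbit count explicitly (``Together with the normalization\ldots''); you never invoke normalisation, so your only-cycles tally is short by one. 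The mixed/chain case is fine, since there the chain equations already absorb the role you tried to assign to $c=1$.
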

\begin{proof}
Assume $h$ has a cycle, then $\frac{q(y_0)}{q(y_1)}=1$ and $n\big(y_0-f(x)\big)=n\big(y_1-f(x)\big)\,\forall x$. For any non-cyclic structure of length $r$ (e.g. $3\mapsto5\mapsto7$ and $7 \notin \im(y_0-f)$), $r-1$ values of $n$ are determined, but for cycles of length $r$ (e.g. $0\mapsto2\mapsto4\mapsto6\mapsto0$) we get only $r-2$ independent equations. Together with the normalization these are $\im f-\#cycles$ (equality) constraints.
If $h$ has no cycle, we have $\im f-1$ independent equations plus the sum constraint. E.g.:
$
\frac{n(2)}{n(4)}=\frac{n(4)}{n(6)}=\frac{n(3)}{n(5)}
$ 
implies
$
n(4)=n(6)\frac{n(3)}{n(5)} \; \mbox{ and }\; n(2)=\frac{n(4)^2}{n(6)}\,.
$

Further,
$$
\frac{n\big(y_0-f(x)\big)}{n\big(y_1-f(x)\big)}=\frac{q(y_0)}{q(y_1)}=\frac{\sum_{\tilde x}p(\tilde x)n(y_0-f(\tilde x)}{\sum_{\tilde x}p(\tilde x)n(y_1-f(\tilde x))}$$
introduces a functional relationship between $p$ and $n$.
\end{proof}
Note that if $\tilde m$ does not have any divisors, there are no cycles and thus $\#\im f
$ parameters of $n$ are determined.
\begin{cor}
In all cases the number of fixed parameters is lower bounded by $\max(\lceil 1/2\cdot \im f \rceil, 2) 
 \geq 2$.
\end{cor}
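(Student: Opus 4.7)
The plan is to derive the bound directly from Proposition~\ref{prop:g_not_inj} via a single combinatorial observation about the map $h=h_{y_0,y_1,f}$. First I would argue that every cycle of $h$ has length at least $2$: a fixed point $h(a)=a$ would force $y_0-f(x)=y_1-f(x)$ for the $x$ with $a=y_0-f(x)$, hence $y_0=y_1$, contradicting the defining choice $y_0\neq y_1$ with $g(y_0)=g(y_1)$. Since cycles of $h$ are pairwise disjoint subsets of $\im(y_0-f)$ and $\#\im(y_0-f)=\#\im f$, this at-least-$2$ property immediately gives
\[
2\cdot\#\text{cycles}\;\leq\; \#\im f,\qquad\text{hence}\qquad \#\im f-\#\text{cycles}\;\geq\; \lceil \#\im f/2\rceil.
\]

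Substituting this inequality into both clauses of Proposition~\ref{prop:g_not_inj} settles the $\lceil \#\im f/2\rceil$ term of the maximum: both the ``only cycles'' count $\#\im f-\#\text{cycles}+1$ and the generic count $\#\im f-\#\text{cycles}$ dominate $\lceil \#\im f/2\rceil$.

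For the $2$ term I would split on $\#\im f$, which is at least $2$ because $f$ is non-constant. When $\#\im f\geq 3$ the ceiling bound is itself already $\geq 2$ and nothing further is needed. The borderline case is $\#\im f=2$: either $h$ has no cycle and the generic clause returns $2-0=2$, or $h$ has a cycle which must then have length exactly $2$ and cover both points of $\im(y_0-f)$, placing us in the ``only cycles'' clause which gives $2-1+1=2$.

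The main obstacle, such as it is, is precisely this $\#\im f=2$ edge case, where the ceiling bound alone falls short of $2$ and one has to inspect which of the two subcases of Proposition~\ref{prop:g_not_inj} actually applies. Beyond that, the argument reduces to the one-line combinatorial inequality above plus an elementary case split on $\#\im f$.
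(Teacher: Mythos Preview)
Your argument is correct and is exactly the natural way to deduce the corollary from Proposition~\ref{prop:g_not_inj}; the paper states the corollary without proof, so you are simply filling in the intended one-line combinatorics (no fixed points of $h$, hence each cycle has size $\geq 2$) together with the small $\#\im f=2$ case split.
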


\item[Case 3:] $f$ is not injective.\\
Assume $f(x_0)=f(x_1)$. In a slight abuse of notation we write 
$$
g-g: \begin{array}{ccl} \Z/{\tilde m}\Z \times \Z/{\tilde m}\Z &\rightarrow &\Z/m\Z\\
(y, \tilde y)&\mapsto&g(y)-g(\tilde y)
\end{array}.
$$
Similar as above, we define 
$$
h_{x_0,x_1,g}: \begin{array}{rcl} \im\big(x_0-(g-g)\big) &\rightarrow &\Z/m\Z\\
x_0-g(y)+g(\tilde y)&\mapsto&x_1-g(y)+g(\tilde y)
\end{array}.
$$
We say that $h$ has a cycle if there is a $z\in \N$, s.t. $h^k(a)=(h \circ \ldots \circ h)(a)\in \im\big(x_0-(g-g)\big) \forall k\leq z$ and $h^z(a)=a$. 
\begin{prop} \label{prop:f_not_inj}
Assume $Y=f(X)+N, \, N \independent X$, $f$ is not injective and $n(l)\neq 0, p(k)\neq 0 \,\forall k,l$. Assume further that the model is reversible for a function $g$.
\begin{itemize}
\item If $h$ has only cycles, $\im (g-g)-\#cycles+1
$ parameters of $p$ are fixed.
\item Otherwise $\im (g-g)-\#cycles
$ parameters of $p$ are fixed.
\end{itemize}
\end{prop}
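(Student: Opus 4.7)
The plan is to mirror Case 2 of Proposition~\ref{prop:g_not_inj} with the roles of the forward and backward models exchanged. As a first step I would use \eqref{eq:id} at $(x_0,y)$ and $(x_1,y)$; dividing and invoking $f(x_0)=f(x_1)$ yields
$$
\frac{\tilde n\bigl(x_0-g(y)\bigr)}{\tilde n\bigl(x_1-g(y)\bigr)}=\frac{p(x_0)}{p(x_1)}\qquad\forall y,
$$
which says that $\tilde n(a)/\tilde n(a+\delta)$ is the constant $c:=p(x_0)/p(x_1)$ as $a$ ranges over $\im(x_0-g)$, where $\delta:=x_1-x_0$. This is the exact analogue, on the backward side, of the ratio identity derived for $n$ in Case 2.

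Next I would translate this $\tilde n$-identity into a statement about $p$ by means of the explicit formula $\tilde n(a)=p(g(\tilde y)+a)\,n(\tilde y-f(g(\tilde y)+a))/q(\tilde y)$. Setting $a=x_0-g(y)$ and $\eta:=g(\tilde y)-g(y)\in\im(g-g)$, the ratio identity becomes an equation tying $p(x_0+\eta)$ to $p(x_1+\eta)$ (with the $n$-factors absorbed on the other side as known quantities). As the pair $(y,\tilde y)$ ranges over $\Z/\tilde m\Z\times\Z/\tilde m\Z$, the shift $\eta$ sweeps all of $\im(g-g)$, which is precisely why the domain of the shift-by-$\delta$ map $h$ is $x_0+\im(g-g)=\im\bigl(x_0-(g-g)\bigr)$. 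Each such $\eta$ supplies one equation attached to the edge $x_0+\eta\mapsto x_1+\eta$ of $h$.

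Once the translation to $p$ is in place, I would carry out the cycle-counting argument from the proof of Proposition~\ref{prop:g_not_inj} verbatim: a non-cyclic chain of length $r$ (whose terminal iterate exits $\im(x_0-(g-g))$) produces $r-1$ independent equalities on $p$-values, while a cycle of length $r$ yields only $r-2$, since multiplying the $r$ ratios around the cycle gives $c^r$, forcing $c^r=1$ and therefore $c=1$ (so one of the ratio equations becomes dependent on the others). Summing the contributions over the connected components of $h$ produces the total of $\#\im(g-g)-\#\mathrm{cycles}$ equality constraints on $p$; in the all-cycles case the normalization $\sum_x p(x)=1$ interacts non-trivially with the induced equalities $p(x_0+\eta)=p(x_1+\eta)$ to supply the additional $+1$.

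The main obstacle I anticipate is the translation step: pulling a clean constraint on $p$ out of the $\tilde n$-ratio identity is delicate because the formula for $\tilde n$ drags along $n$-factors whose arguments depend on $\tilde y$ and on $f$. My plan is to first establish the $\tilde n$-ratio identity along orbits of $h$ and only then substitute the $\tilde n$-formula, exploiting that the two $n$-arguments on the numerator and denominator differ only through $f$ evaluated on points that are $\delta$-shifts of each other; this makes the $n$-ratio a fixed function of $\eta$ alone, so that the resulting equation is of the desired form $p(x_0+\eta)/p(x_1+\eta)=c\cdot(\text{known})$, in perfect analogy with the $n(y_0-f(x))/n(y_1-f(x))=q(y_0)/q(y_1)$ of Case 2. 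Once this step is in hand, the cycle accounting is mechanical.
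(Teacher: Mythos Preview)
Your approach is essentially the paper's: divide \eqref{eq:id} at $(x_0,y)$ and $(x_1,y)$ to obtain $p(x_0)/p(x_1)=\tilde n(x_0-g(y))/\tilde n(x_1-g(y))$, then substitute the explicit formula for $\tilde n$ to land on
\[
\frac{p(x_0)}{p(x_1)}=\frac{p\bigl(x_0-g(y)+g(\tilde y)\bigr)\,n\bigl(\tilde y-f(x_0-g(y)+g(\tilde y))\bigr)}{p\bigl(x_1-g(y)+g(\tilde y)\bigr)\,n\bigl(\tilde y-f(x_1-g(y)+g(\tilde y))\bigr)}\quad\forall y,\tilde y,
\]
and finally run the cycle count from Proposition~\ref{prop:g_not_inj}. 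That is exactly what the paper does (it writes this identity and then says ``the rest follows analogously'').

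One small correction to your translation step: the structural observation that the two $n$-arguments differ by $f(x_1+\eta)-f(x_0+\eta)$, a quantity depending only on $\eta$, does \emph{not} by itself force the ratio $n(\cdot)/n(\cdot)$ to depend only on $\eta$---a ratio $n(a)/n(a+d)$ generally depends on $a$, and here $a=\tilde y-f(x_0+\eta)$ still carries $\tilde y$. The correct reason the $n$-ratio is determined by $\eta$ alone is simply that the other two factors in the displayed identity, namely $p(x_0)/p(x_1)$ and $p(x_0+\eta)/p(x_1+\eta)$, are. This is enough for your purposes: for each $\eta\in\im(g-g)$ you get one well-defined equation linking $p(x_0+\eta)$ and $p(x_1+\eta)$, and the chain/cycle accounting then proceeds exactly as in Case~2. (For the cycle case, note that if one uses a common $\tilde y$ along a cycle the $n$-factors telescope, yielding $(p(x_0)/p(x_1))^r=1$ and hence $p(x_0)=p(x_1)$, matching the $q(y_0)=q(y_1)$ conclusion of Proposition~\ref{prop:g_not_inj}.)
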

\begin{proof}
From \eqref{eq:id} it follows that
\begin{align*}\label{eq:g_not_inj}
\frac{p(x_0)}{p(x_1)}&=\frac{\tilde n\big(x_0-g(y)\big)}{\tilde n\big(x_1-g(y)\big)}
=\frac{p\big(x_0-g(y)+g(\tilde y)\big)\cdot n\Big(\tilde y -f\big(x_0-g(y)+g(\tilde y)\big)\Big)}{p\big(x_1-g(y)+g(\tilde y)\big)\cdot n\Big(\tilde y -f\big(x_1-g(y)+g(\tilde y)\big)\Big)} \;\forall y,\tilde y
\end{align*}
The rest follows analogously to the proof of Proposition \ref{prop:g_not_inj}.
\end{proof}

If $(x_1-x_0)$ does not divide $m$, there are no cycles and thus $\im (g-g)
$ parameters of $p$ are determined.
\begin{cor}
In all cases the number of fixed parameters is lower bounded by $\max(\lceil 1/2\cdot \im (g-g)\rceil, 2)
 \geq 2$.
\end{cor}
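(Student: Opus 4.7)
My plan is to separate the two parts. For part (i), I would use a simple double-counting argument: since $N \independent X$, the conditional support $\supp (Y|X=x) = f(x)+\supp N$ has cardinality $\#\supp N$ for every $x \in \supp X$, so the joint support has exactly $\#\supp X \cdot \#\supp N$ atoms. A reverse ANM $X = g(Y) + \tilde N$ with $\tilde N \independent Y$ forces $\#\supp (X|Y=y) = \#\supp \tilde N$ for every $y \in \supp Y$, so the joint support also equals $\#\supp Y \cdot \#\supp \tilde N$. Hence $\#\supp Y$ divides $\#\supp X \cdot \#\supp N$, which is the contrapositive of (i).

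For part (ii), the first step is to equate the two expressions for $\prob(X=x, Y=y)$ coming from the forward and backward decompositions, giving the key functional equation
$$p(x)\, n(y-f(x)) \;=\; q(y)\, \tilde n(x - g(y)) \qquad \forall x,y.$$
Since $\#\supp X = m$ and $\#\supp N = \tilde m$, both $p$ and $n$ are everywhere positive, and then (i) forces $\#\supp Y = \tilde m$ and $\#\supp \tilde N = m$, so $q$ and $\tilde n$ are everywhere positive too. I would then split the analysis into three cases: (A) both $f$ and $g$ bijective (which requires $m = \tilde m$), (B) $g$ not injective, (C) $f$ not injective. Case (A) is handled as a separate auxiliary proposition: taking the ratio of the key equation at $(x+1,y)$ against $(x,t_y)$ where $g(t_y) = g(y)-1$ and telescoping over a full cycle $x, x+1, \ldots, x+m$, the $n$-factors cancel by bijectivity of $f$, and one obtains $q(y) = q(t_y)$ for all $y$. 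Iterating yields $Y$ uniform and, via the key equation, $X$ uniform -- contradicting the non-uniformity hypothesis.

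In cases (B) and (C), a pair $(y_0,y_1)$ with $g(y_0)=g(y_1)$ (resp.\ $(x_0,x_1)$ with $f(x_0)=f(x_1)$) makes the $\tilde n$-factor (resp.\ $n$-factor) cancel in the ratio of the key equation, yielding
$$\frac{n(y_0-f(x))}{n(y_1-f(x))} \;=\; \frac{q(y_0)}{q(y_1)} \qquad \forall x \in \supp X,$$
together with an analogous identity for $p$ in case (C). To count the resulting independent equality constraints, I would introduce the shift map $h \colon \im (y_0-f) \to \Z/\tilde m\Z$, $y_0-f(x) \mapsto y_1-f(x)$, decompose its functional graph into cycles and terminating chains, and observe that a cycle of length $r$ contributes $r-1$ independent equations on $n$ (one relation being used up by self-consistency around the cycle), while a terminating chain of length $r$ contributes $r-1$ equations on distinct values of $n$ in addition to the normalization. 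Since $f$ is non-constant, $h$ has at least one non-trivial orbit, producing at least one additional constraint beyond $\sum n(l) = 1$; the analogous argument in case (C) produces a constraint on $p$. Moreover, $q(y_0)/q(y_1) = \sum_{\tilde x} p(\tilde x) n(y_0-f(\tilde x)) / \sum_{\tilde x} p(\tilde x) n(y_1-f(\tilde x))$ ties $p$ and $n$ together through a further functional relationship.

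The hard part will be the combinatorial bookkeeping of cycles versus chains of $h$: several degenerate configurations of $f$ (or of $g$ in case (C)) can collapse naively distinct constraints, so obtaining the clean lower bound of at least two fixed parameters -- i.e. at least one additional equality constraint beyond the automatic normalization -- will require careful case separation, especially when $\#\im f$ is small or when $\tilde m$ has many divisors. Once this counting is carried out, the stated propositions and the $\max(\lceil \tfrac{1}{2} \im f\rceil, 2) \geq 2$ corollaries follow by elementary arithmetic.
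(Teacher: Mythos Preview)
Your overall architecture---the key functional equation $p(x)\,n(y-f(x))=q(y)\,\tilde n(x-g(y))$, the three-way case split, the ratio trick, and the cycle/chain bookkeeping via a shift map $h$---matches the paper's approach exactly for part~(i), for Case~1 (both bijective), and for Case~2 ($g$ not injective).

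There is, however, a genuine gap in your treatment of Case~3 ($f$ not injective), which is precisely the case underlying the corollary in question. Taking the ratio of the key equation at $(x_0,y)$ and $(x_1,y)$ with $f(x_0)=f(x_1)$ gives
\[
\frac{p(x_0)}{p(x_1)}=\frac{\tilde n\bigl(x_0-g(y)\bigr)}{\tilde n\bigl(x_1-g(y)\bigr)}\qquad\forall y,
\]
which is a constraint on $\tilde n$, \emph{not} on $p$. Your ``analogous identity for $p$'' does not yet exist at this stage. To turn this into a constraint on the forward parameters one must substitute the expression $\tilde n(a)=p\bigl(g(\tilde y)+a\bigr)\,n\bigl(\tilde y-f(g(\tilde y)+a)\bigr)/q(\tilde y)$, valid for every $\tilde y$ with $q(\tilde y)\neq 0$. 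This produces
\[
\frac{p(x_0)}{p(x_1)}=\frac{p\bigl(x_0-g(y)+g(\tilde y)\bigr)\,n\bigl(\tilde y-f(x_0-g(y)+g(\tilde y))\bigr)}{p\bigl(x_1-g(y)+g(\tilde y)\bigr)\,n\bigl(\tilde y-f(x_1-g(y)+g(\tilde y))\bigr)}\qquad\forall y,\tilde y,
\]
so the relevant shift map is $h_{x_0,x_1,g}:x_0-g(y)+g(\tilde y)\mapsto x_1-g(y)+g(\tilde y)$ with domain of size $\#\im(g-g)$, where $g-g$ denotes the difference map $(y,\tilde y)\mapsto g(y)-g(\tilde y)$. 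This is why the bound in the present corollary reads $\lceil\tfrac12\,\#\im(g-g)\rceil$ and not $\lceil\tfrac12\,\#\im f\rceil$ or $\lceil\tfrac12\,\#\im g\rceil$; your closing sentence quotes only the $\im f$ version, which belongs to the \emph{other} corollary (Case~2). Once this substitution step is in place, the cycle/chain counting and the elementary arithmetic you describe go through unchanged.
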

\end{enumerate} 
Note that these three cases are sufficient since $f$ and $g$ injective implies $n=m$ and $f$ and $g$ bijective.

\bibliographystyle{natmlapa}

\bibliography{bibliography}

\end{document}